\documentclass[10pt]{article}
\usepackage[preprint]{tmlr}
\usepackage{amsmath,amsfonts,bm}

\def\eqref#1{equation~\ref{#1}}
\def\1{\bm{1}}

\DeclareMathAlphabet{\mathsfit}{\encodingdefault}{\sfdefault}{m}{sl}
\SetMathAlphabet{\mathsfit}{bold}{\encodingdefault}{\sfdefault}{bx}{n}

\usepackage{hyperref}
\usepackage{url}
\usepackage{graphicx}
\usepackage{booktabs}
\usepackage{caption}
\usepackage{subcaption}
\usepackage{amsthm}
\usepackage{amsmath}
\usepackage{amssymb}
\usepackage{mathtools}
\usepackage{xcolor}
\usepackage{multirow}
\usepackage{array}
\usepackage{microtype}
\usepackage{enumitem}
\usepackage{tikz}
\usetikzlibrary{arrows.meta, positioning, shapes.geometric, fit, backgrounds}
\usepackage{placeins}

\newtheorem{definition}{Definition}

\newtheorem{remark}{Remark}
\newtheorem{proposition}{Proposition}
\newtheorem{lemma}{Lemma}

\newtheorem{corollary}{Corollary}

\newcommand{\SHD}{\textsc{SHD}}
\newcommand{\DAGMA}{\textsc{Dagma}}
\newcommand{\NOTEARS}{\textsc{Notears}}
\newcommand{\GOLEM}{\textsc{golem}}
\newcommand{\ARelax}{\textsc{AdaptiveRelax}}
\newcommand{\Lfit}{\mathcal{L}_{\mathrm{fit}}}
\newcommand{\Laug}{\mathcal{L}_{\mathrm{aug}}}
\newcommand{\Lcov}{\mathcal{L}_{\mathrm{cov}}}
\newcommand{\Llstsq}{\mathcal{L}_{\mathrm{lstsq}}}
\newcommand{\Lloglik}{\mathcal{L}_{\mathrm{loglik}}}
\newcommand{\corr}{\mathrm{corr}}

\title{%
Guide, Not Bind: Why Defeasible Priors Fail in\\
Augmented Lagrangian Causal Discovery%
}

\author{\name Sairam Sundararaman\thanks{These authors contributed equally to this work.} \email pes1ug23am257@pesu.pes.edu \\
      \addr Department of Computer Science (AI \& ML)\\
      PES University
      \AND
      \name Sara Girdhar\footnotemark[1] \email pes1ug23am273@pesu.pes.edu \\
      \addr Department of Computer Science (AI \& ML)\\
      PES University
      \AND
      \name Manit Narasimha Murthy \email pes1ug23cs348@pesu.pes.edu\\
      \addr Department of Computer Science \\
      PES University
      \AND
      \name Samrudh N \email pes1ug23cs507@pesu.pes.edu\\
      \addr Department of Computer Science\\
      PES University
      \AND
      \name Bhaskarjyoti Das \email bhaskarjyotidas@pes.edu\\
      \addr Department of Computer Science (AI \& ML)\\
      PES University
      }

\begin{document}
\maketitle

% =====================================================================
\begin{abstract}
Differentiable causal discovery methods increasingly encode expert priors as forbidden-edge
constraints enforced by an Augmented Lagrangian (ALM) penalty, on the assumption that a
data-adaptive relaxation mechanism will discount and eventually override a rule the data
consistently contradicts. We show this design, which we call \emph{guide, not bind}, fails for
two independent, precisely characterized reasons, and that directly repairing both restores it
only partially. First, sequential penalty-ramping ALM suppresses a wrongly-forbidden true edge
before any counterfactual check can detect it: we give three necessary conditions any adaptive
relaxation must satisfy to avoid this (Proposition~\ref{prop:conditions}), prove that DADU---the
natural relaxation rule this paper introduces as the object of study---violates all three
(Corollary~\ref{cor:dadu_failure}), and confirm the failure
across 3{,}072 training runs spanning graphs from 4 to 32 nodes, where a single wrong prior
suppresses a true edge in 87--97\% of trials under DADU. Second, and independent of
any fix to the mechanism, we prove in closed form that the standard correlation-matching objective
ties a true edge and its reverse to an identical cost of exactly $2r^2$ (Lemma~\ref{lem:tie}), not
because the underlying equal-variance model is unidentifiable, but because normalizing to
correlation discards exactly the variance information that would make it identifiable; covariance
matching instead separates the two directions by a provable margin of at least $w_0^4$
(Lemma~\ref{lem:separation}). We build and test the fix our own diagnosis specifies---a relaxation
operator satisfying all three necessary conditions, combined with covariance matching---and find
that across 768 paired trials on identical graphs it recovers a wrongly-forbidden edge 27 to 52
times more often than DADU does ($p<10^{-5}$), yet the majority of the suppressed
edge's weight is still absorbed by its unconstrained reverse direction rather than restored to the
correct one. Finally, we ask whether either obstruction is an artifact of the correlation-matching
objective our testbed uses, by re-running the full sweep under the actual least-squares and
likelihood objectives \NOTEARS{}/\DAGMA{} and \GOLEM{} use: the identifiability tie is indeed
correlation-specific and disappears under both, exactly as our diagnosis predicts, but the
suppression mechanism is not---it worsens under a likelihood objective, which suppresses the
wrongly-forbidden edge in 96--100\% of trials, more often than correlation matching does. A
provably-identifying objective, it turns out, is neither necessary nor sufficient protection
against a penalty schedule that never gave the data a chance to be heard.
\end{abstract}

\section{Introduction}
\label{sec:intro}

Consider a practitioner building a differentiable causal discovery system who holds one piece of
domain knowledge with less than full confidence---say, that material does not determine shape. The
standard advice against encoding this as a hard constraint is well known: a hard constraint is
unforgiving, and a single incorrect one cannot be corrected by any amount of data. The
better-motivated design is to treat such a rule as a prior---something the learner trusts by
default, discounts as evidence accumulates against it, and overrides outright once that evidence
is consistent and strong. This is not a new idea: it is the logic behind empirical-Bayes shrinkage
and spike-and-slab priors, where a point-mass belief yields to sufficient likelihood evidence
rather than remaining fixed in advance. Stated this way, the design is less a choice than a piece of common sense: a prior should guide the learner, not bind it.

This instinct has a natural implementation in differentiable causal discovery. Such methods already
recover DAGs by gradient descent on continuous relaxations of acyclicity
\citep{zheng2018dags,bello2022dagma,ng2020role,lachapelle2020gradient,yu2021dags}, and already carry
the machinery to enforce a hard constraint on that relaxation: an Augmented Lagrangian (ALM), the
same tool these methods use to enforce acyclicity itself
\citep{hestenes1969multiplier,powell1969method,bertsekas2014constrained}. So encoding a forbidden
edge costs nothing new---penalize the learner for violating it, and grow that penalty over training
with an ALM. One further piece makes the constraint defeasible: periodically check whether the data
supports the forbidden edge regardless, and relax the penalty if it does. On paper, this is exactly
\emph{guide, not bind}. It has the right shape, built entirely from parts these methods already
use, and it should work.

It does not, and the first reason is not something a better threshold can fix. The adaptive check
works by asking a counterfactual: if this edge were removed right now, how much worse would the
fit become? That is a meaningful question to ask of an edge still near its data-supported value.
It is not a meaningful question to ask of an edge the penalty has already driven to near
zero---an edge that is already gone in all but name will correctly report that removing it costs
almost nothing, regardless of what the data actually wanted. Sequential penalty-ramping ALM grows
the constraint geometrically, and does so before the adaptive check ever gets a chance to look; by
the time the check runs, there is usually nothing left for it to see. We call this the \emph{early
suppression trap}. Sweeping the correction threshold across a wide range changes nothing, because
the evidence the threshold is meant to act on has already been erased before any threshold is
applied. This part of the failure has nothing to do with causal discovery specifically: it is a
statement about what happens when a penalty is allowed to race ahead of the check meant to keep it
honest, and it recurs anywhere a defeasible constraint is enforced by a penalty that grows on a
fixed schedule---the classical ALM convergence theory that guarantees the multiplier diverges when
a constraint is genuinely incompatible with the data \citep{bertsekas2014constrained} says nothing
about what an adaptive relaxation operator sees while that divergence is still in progress, and it
is exactly that blind spot the early suppression trap exploits.

Suppose this is fixed: suppose the check runs earlier, evaluated before the penalty has done any
damage, so it always sees the edge at its true, data-supported value. This should settle the
matter. It does not, and the second reason is more interesting than the first, because it survives
a perfect fix to the first one. An early, honestly-timed check compares how well the model fits
with a candidate edge in each of its two possible directions. If that comparison cannot tell a true
edge from its reverse in the first place, evaluating it early buys nothing---the check will
faithfully report that both directions look equally good, because, under the objective these
methods use, they are. This should be surprising: the model class in question assumes every
variable has the same noise variance, and it is a known fact that under that assumption, causal
direction is in principle recoverable from data alone \citep{petersbuhlmann2014}. The tie, in
other words, is not inherited from an unidentifiable model. Something else discards the very
signal that would make identification possible, and it turns out to be one specific, avoidable
modeling choice: comparing normalized correlation instead of raw covariance throws away exactly
the variance asymmetry that equal-variance identifiability depends on. This is a narrower claim
than it might sound: correlation matching is one fitting objective among several used in this
literature, not the only one, and part of what this paper does is find out whether the tie is a
property of that specific choice or of forbidden-edge enforcement more broadly---a question the
diagnosis alone cannot answer and that we return to with a direct test in
Section~\ref{sec:emp-obstruction2b}.

Two questions remain even once both diagnoses are in hand. Is a mechanism satisfying our three
necessary conditions actually buildable, or only specifiable? And if it is built, and combined with
covariance matching as our own analysis prescribes, does it recover a wrongly-forbidden true edge,
or does some third obstacle appear once the first two are cleared? We build such a mechanism and
test it directly. The answer is instructive rather than clean: across matched pairs of identical
random graphs, the combined fix recovers the wrongly-forbidden edge far more often than DADU
does---a difference too large to be chance. But recovery remains the minority
outcome. In most trials, the weight the forbidden penalty pushes out of the true edge is absorbed
by its unconstrained reverse, not restored to the correct direction---the tie our own Lemma~1
predicts, appearing not as a closed-form curiosity but as the dominant failure mode of a system
built specifically to avoid it. Fixing the two obstructions we diagnose is necessary. It is not
sufficient.

A further question follows directly from how the paper is set up so far: everything above is
proved and measured on a single fitting objective, correlation matching, that we chose for the
testbed rather than one drawn from an existing, deployed system. Nothing in the argument requires
this choice, and the actual objectives used by \NOTEARS{} \citep{zheng2018dags}, \DAGMA{}
\citep{bello2022dagma}, and \GOLEM{} \citep{ng2020role} fit least-squares reconstruction or
likelihood on raw, unnormalized data instead. A reader who accepts our diagnosis in
Remark~\ref{rem:pb}---that the tie comes specifically from discarding scale information, not from
forbidden-edge enforcement in general---should expect that tie to vanish once scale is put back.
We test this directly rather than leaving it as a plausible inference: Section~\ref{sec:empirical}
re-runs the entire sweep under a \NOTEARS{}/\DAGMA{}-style least-squares objective and a
\GOLEM{}-EV-style likelihood objective, both written as exact reconstructions of the objectives
those methods actually optimize. The identifiability side of the diagnosis survives this test
cleanly. The suppression side does not, and not in the reassuring direction: the likelihood
objective, which sidesteps the tie exactly as predicted, turns out to suppress the wrongly-forbidden
edge \emph{more} often than correlation matching does, not less. Untangling why a fix to one
obstruction can leave the other one worse is a large part of what the rest of this paper is about.

This paper gives all four findings---the two failures, the partial fix, and the test of whether
either failure is specific to our testbed's objective---a precise, tested treatment.

\begin{enumerate}[leftmargin=1.4em]
  \item \textbf{The early suppression trap} (Section~\ref{sec:mechanism}): three necessary
    conditions for any data-adaptive relaxation to avoid irreversible suppression
    (Proposition~\ref{prop:conditions}), and a proof that DADU---the natural relaxation rule this
    paper introduces as the object of study, not a rule drawn from an existing published
    implementation---violates all three simultaneously once suppression begins
    (Corollary~\ref{cor:dadu_failure}).
  \item \textbf{An exact identifiability barrier, correctly attributed}
    (Section~\ref{sec:identifiability}): a closed-form proof that correlation matching ties a true
    edge and its reverse exactly (Lemma~\ref{lem:tie}); a precise account of why this follows from
    discarding marginal-variance information rather than from any inherent non-identifiability of
    the underlying model \citep{petersbuhlmann2014}; a closed-form proof that covariance matching
    instead provably separates the two directions (Lemma~\ref{lem:separation}); and a
    generalization of the same mechanism to any Markov-equivalence-preserving edge reversal
    (Proposition~\ref{prop:general}).
  \item \textbf{A combined mechanism, built and tested} (Section~\ref{sec:combined}): a relaxation
    operator that operationalizes all three necessary conditions of Proposition~\ref{prop:conditions}
    (Proposition~\ref{prop:arelax}), evaluated together with covariance matching on 768 paired
    synthetic instances. The combined fix significantly outperforms DADU
    ($p<10^{-5}$) but recovers the correct edge in the minority of trials, with the unconstrained
    reverse edge remaining the dominant attractor.
  \item \textbf{A test of whether either obstruction is testbed-specific}
    (Section~\ref{sec:emp-obstruction2b}): the full sweep re-run under a least-squares objective
    matching \NOTEARS{}/\DAGMA{} and a likelihood objective matching \GOLEM{}-EV, confirming that
    Obstruction~II is specific to correlation matching while Obstruction~I is not---and that the
    likelihood objective, the one this literature already treats as the safe choice, suppresses
    more often than the objective we spend the rest of the paper criticizing.
  \item \textbf{A broad empirical validation} (Section~\ref{sec:empirical}): all of the above
    evaluated across 6{,}144 training runs spanning graphs from 4 to 32 nodes, two edge densities,
    three noise scales, both equal-variance and heteroscedastic noise, and four fitting objectives,
    reporting only what this controlled synthetic sweep directly supports.
\end{enumerate}

Two things are worth stating plainly before the technical sections begin, since they bound what the
rest of the paper can and cannot claim. The first is scope. The mechanism failure and the
identifiability barrier apply to any ALM-based structure learner with sequential penalty ramping
and a fitting objective from the family we test; all empirical results in this paper use synthetic
linear Gaussian SEMs with known ground-truth structure, chosen specifically so that
Sections~\ref{sec:mechanism} and~\ref{sec:identifiability}'s claims can be checked exactly against
a known answer rather than estimated from an uncertain one. This buys precision at the cost of not
showing what either obstruction looks like once real-world measurement noise, concept extraction,
or a non-Gaussian data-generating process is also in the loop---a cost we return to and itemize
fully in Section~\ref{sec:limitations}. We are explicit throughout about which claims are proved,
which are illustrated, and which remain open, and that section collects the open ones in one place.

The second is where this paper sits relative to what is already known. Differentiable causal
discovery's Augmented Lagrangian machinery is borrowed wholesale from classical constrained
optimization \citep{hestenes1969multiplier,powell1969method,bertsekas2014constrained}, whose theory
already tells us what happens if a constraint is simply wrong---the multiplier
diverges---but says nothing about a relaxation operator racing that divergence in real time before
it has a chance to detect the mistake, which is precisely the gap Obstruction~I occupies. Encoding
prior knowledge as a forbidden edge has a long history in discrete, constraint-based causal
discovery \citep{spirtes2000causation,borboudakis2017incorporating,meek1995causal,
andrews2020completeness}, including direct evidence that one incorrect constraint can cascade
through an entire recovered graph \citep{constantinou2021impact}; what is new here is the specific
failure that arises once the constraint is not fixed in advance but is instead enforced by a
penalty that grows while training runs. On the identifiability side, the fact that linear Gaussian
SEMs are recoverable only up to Markov equivalence in general
\citep{peters2017elements,vermapearl1990,chickering2002} is well known, as are three routes around
it---non-Gaussian noise \citep{shimizu2006lingam}, nonlinearity \citep{hoyer2009nonlinear}, and
interventional or temporal data \citep{hauser2012characterization,lippe2022citris}---but the fourth
route, equal-variance identifiability \citep{petersbuhlmann2014}, is less widely appreciated, and to
our knowledge no prior work has asked what a correlation-based fitting objective costs a model class
that already satisfies it. We discuss the closest related strands in optimization stability
\citep{nazaret2023stable,waxman2024dagma,yi2025robustness}, neuro-symbolic learning
\citep{manhaeve2018deepproblog,de2019neuro,marconato2023not}, and soft, score-integrated priors
\citep{darvariu2024llm} at length in Section~\ref{sec:related}, after the technical machinery those
comparisons depend on has been introduced.

% =====================================================================
\section{Background}
\label{sec:background}

This section fixes the objects the rest of the paper manipulates: structural equation models and
the differentiable relaxation used to fit them (\S\ref{sec:bg-sem}), the Augmented Lagrangian
machinery used to enforce a prior constraint on them (\S\ref{sec:bg-alm}), and the precise sense in
which a fitted model's causal direction can, or cannot, be recovered from data alone
(\S\ref{sec:bg-ident}). Each subsection states one definition and the one fact about it this paper
depends on; a reader already fluent in differentiable causal discovery can proceed directly to
Section~\ref{sec:method}, which specializes everything here to this paper's exact objective.

\subsection{Structural Equation Models and Differentiable Discovery}
\label{sec:bg-sem}

Recovering a causal graph from data requires first fixing what a causal graph is and what
recovering one means.

\begin{definition}[Linear Gaussian SEM]
\label{def:sem}
A \emph{linear Gaussian structural equation model} (SEM) over $\mathbf z=(z_1,\dots,z_d)\in\mathbb
R^d$ is a pair $(W,\Sigma_\varepsilon)$ with $W\in\mathbb R^{d\times d}$ zero-diagonal,
$\Sigma_\varepsilon\succ0$, $I-W$ invertible, satisfying
\begin{equation}
    \mathbf z = W^\top\mathbf z + \boldsymbol\varepsilon, \qquad
    \boldsymbol\varepsilon\sim\mathcal N(\mathbf 0,\Sigma_\varepsilon).
    \label{eq:bg_sem}
\end{equation}
$W_{ij}\neq0$ is read as: variable $j$ is a direct cause of variable $i$. The associated graph
places a directed edge $j\to i$ for every nonzero $W_{ij}$; the model is \emph{acyclic} iff this
graph is a DAG, equivalently iff $W$ is permutation-similar to a strictly upper-triangular matrix.
\end{definition}

Solving Eq.~\ref{eq:bg_sem} for $\mathbf z$ gives the implied covariance
$\Sigma_\mathrm{SEM}(W)=(I-W)^{-1}\Sigma_\varepsilon(I-W)^{-\top}$---the only property of
$(W,\Sigma_\varepsilon)$ an i.i.d.\ sample of $\mathbf z$ can constrain, and consequently the only
handle any estimation procedure has on $W$. \emph{Causal discovery} recovers $W$, in particular its
direction, from such a sample without ever intervening on any $z_i$. Classical algorithms search
directly over the discrete space of DAGs, whose size grows super-exponentially in $d$;
\emph{differentiable} causal discovery replaces this search with continuous optimization by
relaxing acyclicity to a smooth regularizer $h(W)\geq0$, zero iff $W$ is acyclic, and minimizing a
smooth fit loss $\ell(W)$ plus $\beta h(W)$ by gradient descent \citep{zheng2018dags,
bello2022dagma}. We use \DAGMA{}'s log-determinant regularizer throughout,
$h^s(W)=-\log\det(sI-W\circ W)+d\log s$ \citep{bello2022dagma}; Section~\ref{sec:method} fixes
$\ell$ to the specific choice this paper's core results depend on, and
Section~\ref{sec:emp-obstruction2b} substitutes two alternatives drawn directly from the
literature.

\subsection{Augmented Lagrangian Enforcement of Prior Constraints}
\label{sec:bg-alm}

Practitioners frequently hold knowledge that some entries of $W$ are zero---a physical fact, such
as material not determining shape---before observing any data. Encoding it requires a mechanism
for enforcing a hard constraint on a continuous optimization problem.

\begin{definition}[Augmented Lagrangian enforcement]
\label{def:alm}
Let $M\in\{0,1\}^{d\times d}$ mark entries of $W$ known to be zero, and let
$c(W):=\|W\circ M\|_1$. The \emph{Augmented Lagrangian method} (ALM)
\citep{hestenes1969multiplier,powell1969method,bertsekas2014constrained} enforces $c(W)=0$ by
minimizing, jointly with the base objective $\ell(W)+\beta h(W)$, the augmented term $\lambda\,c(W)
+ \tfrac{\rho}{2}c(W)^2$, alternating a gradient step on $W$ with dual ascent on the multiplier
$\lambda$,
\begin{equation}
    \lambda\leftarrow\lambda+\rho\,c(W), \qquad \rho\leftarrow\kappa\rho,\quad\kappa>1,
    \label{eq:bg_alm}
\end{equation}
with $\rho$ typically ramped geometrically (\emph{penalty ramping}) rather than held fixed.
\end{definition}

The one fact about Definition~\ref{def:alm} this paper depends on is due to
\citet{bertsekas2014constrained}: if $c(W)=0$ is incompatible with the unconstrained minimizer of
$\ell+\beta h$, the multiplier sequence diverges, $\lambda_t\to\infty$. Every standard use of ALM---
in constrained optimization generally \citep{nandwani2019primal,chamon2020probably,
cotter2019optimization,achiam2017constrained} and in differentiable causal discovery specifically
\citep{zheng2018dags,bello2022dagma}---presumes $c(W)=0$ is correctly specified, in which case this
divergence is precisely the intended behavior: an always-true constraint should be enforced ever
more strictly. A \emph{defeasible} constraint is one where $c(W)=0$ might itself be wrong, and where
the enforcement mechanism must therefore also be able to relax $\lambda$ when the data disagrees;
Section~\ref{sec:method} specifies the exact relaxation rule this paper studies, and
Section~\ref{sec:mechanism} shows what Bertsekas's divergence result implies for that rule once
$\rho$ is ramped geometrically rather than held fixed.

\subsection{Markov Equivalence and Identifiability}
\label{sec:bg-ident}

Two SEMs with different $W$ can imply the same distribution over $\mathbf z$. When this happens, no
amount of observational data---however collected or analyzed---can determine which one generated
it; the ambiguity is a property of the data, not a limitation of any particular estimator.

\begin{definition}[Markov equivalence]
\label{def:markov-equiv}
Two DAGs are \emph{Markov equivalent} if every distribution consistent with one is consistent with
the other under some choice of parameters. For DAGs, this holds exactly when the graphs share a
skeleton (the same edges, ignoring direction) and the same v-structures \citep{vermapearl1990}, a
characterization made algorithmically complete by \citet{meek1995causal,chickering2002,
andrews2020completeness}.
\end{definition}

Under a general noise covariance $\Sigma_\varepsilon$, a linear Gaussian SEM is identifiable only
up to its Markov equivalence class \citep{peters2017elements}: reversing an edge, in general,
leaves an equally good explanation running the other way. Three routes are known to restore
identifiability by breaking this symmetry: non-Gaussian noise \citep{shimizu2006lingam},
nonlinearity \citep{hoyer2009nonlinear}, and interventional or temporal data
\citep{hauser2012characterization,lippe2022citris}. A fourth route, central to this paper, requires
none of these.

\begin{definition}[Equal-variance identifiability]
\label{def:ev-ident}
If $\Sigma_\varepsilon=\sigma^2I$---every exogenous noise term has the same variance---
\citet{petersbuhlmann2014} show the DAG is generically identifiable from $\Sigma_\mathrm{SEM}(W)$
alone: a variable further downstream in the causal order accumulates strictly greater marginal
variance under equal innovation variance, and comparing marginal variances across nodes recovers
direction, with no non-Gaussianity or intervention required.
\end{definition}

Definition~\ref{def:ev-ident} matters here because $\Sigma_\varepsilon=I$ is already the noise
model Section~\ref{sec:method} assumes: the question this paper asks is therefore not whether that
model is identifiable---it is---but whether a given fit loss $\ell$ actually exploits the signal
Definition~\ref{def:ev-ident} says is present. Section~\ref{sec:identifiability} shows that fitting
normalized correlation rather than covariance discards it. Correlation-matching is not the only
$\ell$ used in differentiable causal discovery---\NOTEARS{}, \DAGMA{}, and \GOLEM{}
\citep{ng2020role} fit least-squares or likelihood objectives that retain scale information---but
it is the choice this paper's core testbed makes, and Section~\ref{sec:identifiability} traces the
failure to that specific, avoidable step rather than to differentiable causal discovery generally.
Section~\ref{sec:emp-obstruction2b} makes this distinction load-bearing rather than rhetorical, by
substituting the two alternatives directly.

% =====================================================================
\section{Setup: Defeasible Priors via Augmented Lagrangian}
\label{sec:method}

We assume $n$ i.i.d.\ samples $\mathbf z_1,\dots,\mathbf z_n\in\mathbb R^d$ from a linear Gaussian
SEM (Eq.~\ref{eq:linear_sem} below) with unknown zero-diagonal weight matrix $W$, and form the
empirical covariance
\begin{equation}
    \hat\Sigma = \frac1n\sum_{i=1}^n(\mathbf z_i-\bar{\mathbf z})(\mathbf z_i-\bar{\mathbf z})^\top,
    \qquad \bar{\mathbf z}=\frac1n\sum_{i=1}^n\mathbf z_i.
    \label{eq:emp_cov}
\end{equation}
We assume a linear Gaussian structural equation model
\begin{equation}
    \mathbf z = W^\top \mathbf z + \boldsymbol\varepsilon, \qquad
    \boldsymbol\varepsilon \sim \mathcal N(\mathbf 0, I),
    \label{eq:linear_sem}
\end{equation}
with $W\in\mathbb R^{d\times d}$, zero diagonal, and acyclicity enforced softly via the \DAGMA{}
log-determinant regularizer $h^s(W)=-\log\det(sI-W\circ W)+d\log s$ \citep{bello2022dagma}. Our core
testbed fits $W$ by minimizing the Frobenius distance between \emph{normalized correlation}
matrices,
\begin{equation}
    \Lfit(W) = \left\| \corr\!\big(\Sigma_\mathrm{SEM}(W)\big) - \corr(\hat\Sigma) \right\|_F^2,
    \qquad \Sigma_\mathrm{SEM}(W) = (I-W)^{-1}(I-W)^{-\top}.
    \label{eq:lfit}
\end{equation}
Equation~\ref{eq:lfit} is the single most consequential design choice in Sections~\ref{sec:method}
through~\ref{sec:combined}: it is the source of Obstruction II, we return to it directly in
Section~\ref{sec:identifiability}, and Section~\ref{sec:emp-obstruction2b} tests what changes when
it is replaced.

\paragraph{The forbidden-edge prior.} A binary mask $M\in\{0,1\}^{d\times d}$ marks
physically-implausible edges. We enforce $\|W\circ M\|_1=0$ via an Augmented Lagrangian term with
per-edge dual variables $\lambda_{ij}$,
\begin{equation}
    \Laug = w_f \Lfit + \gamma\|W\|_1 + \lambda^\top(|W|\circ M)\mathbf 1
    + \tfrac{\rho}{2}\big(\|W\circ M\|_1\big)^2 + \beta_h h^s(W),
    \label{eq:laug}
\end{equation}
updated by standard dual ascent with a geometrically growing penalty,
\begin{equation}
    \lambda_{ij} \leftarrow \lambda_{ij} + \rho\cdot|W_{ij}|\cdot M_{ij}, \qquad
    \rho \leftarrow \kappa\rho, \quad \kappa>1.
    \label{eq:dual_ascent}
\end{equation}
\emph{Guide, not bind} requires a mechanism that can also move $\lambda_{ij}$ back down when a
forbidden edge turns out to be data-supported. The natural candidate---compute a counterfactual
cost $\Delta_{ij} = \Lfit(W\mid W_{ij}=0) - \Lfit(W)$ and relax $\lambda_{ij}$ whenever
$\Delta_{ij}$ exceeds a threshold $\delta$---is what we call the Data-Adaptive Dual Update (DADU):
\begin{equation}
    \lambda_{ij} \leftarrow \max(0,\ \lambda_{ij} - \eta_r\Delta_{ij})
    \quad \text{if } \Delta_{ij}\geq\delta, \qquad \text{else tighten as in
    Eq.~\ref{eq:dual_ascent}}.
    \label{eq:dadu}
\end{equation}
We call this rule DADU because it is the natural instantiation of guide-not-bind's own stated
logic---a data-adaptive check on the dual update---rather than a rule drawn from an existing
published implementation; we introduce it here as the object of study, precisely so that
Sections~\ref{sec:mechanism} and~\ref{sec:identifiability} can show \emph{why} the natural design
fails rather than only that some particular published system does. Sections~\ref{sec:mechanism}
and~\ref{sec:identifiability} show that Eq.~\ref{eq:dadu} fails to realize \emph{guide, not bind},
for two unrelated reasons: it evaluates $\Delta_{ij}$ too late to matter (Obstruction I), and even
evaluated in time, $\Delta_{ij}$ under Eq.~\ref{eq:lfit} cannot tell a data-supported edge from its
reverse (Obstruction II).

\begin{figure}[htbp]
\centering
\begin{tikzpicture}[
    scale=0.71, transform shape,
    box/.style={draw, rounded corners, align=center, minimum width=4.6cm,
                minimum height=1cm, font=\small, fill=blue!4},
    decision/.style={draw, diamond, aspect=2.4, align=center, font=\small,
                      fill=orange!8, inner sep=2pt},
    arr/.style={-{Stealth[length=2mm]}, thick},
    node distance=8mm and 12mm
]

\node[box] (prior) {SEM prior: edge $(i,j)$\\ marked forbidden, $M_{ij}=1$};
\node[box, below=of prior] (grad) {Gradient step on $W$\\ under $\Laug$ (Eq.~\ref{eq:laug})};
\node[decision, below=of grad] (check) {$\Delta_{ij}\geq\delta$?\\[1pt]\footnotesize (Eq.~\ref{eq:dadu})};
\node[box, below left=12mm and -4mm of check] (tighten)
      {\textbf{Tighten} (usual case):\\ $\lambda_{ij}\!\leftarrow\!\lambda_{ij}\!+\!\rho|W_{ij}|$,\ \ $\rho\!\leftarrow\!\kappa\rho$};
\node[box, below right=12mm and -4mm of check] (relax)
      {\textbf{Relax}:\\ $\lambda_{ij}\!\leftarrow\!\max(0,\lambda_{ij}\!-\!\eta_r\Delta_{ij})$};

\draw[arr] (prior) -- (grad);
\draw[arr] (grad) -- (check);
\draw[arr] (check) -| node[pos=0.25, above, font=\scriptsize] {no} (tighten);
\draw[arr] (check) -| node[pos=0.25, above, font=\scriptsize] {yes} (relax);
\draw[arr] (tighten.west) .. controls +(-16mm,10mm) and +(-16mm,-4mm) .. (grad.west);
\draw[arr] (relax.east) .. controls +(16mm,10mm) and +(16mm,-4mm) .. (grad.east);

\begin{scope}[on background layer]
\node[draw, dashed, thick, red!60!black, rounded corners, fit=(check)(tighten),
      inner sep=4mm] (trapbox) {};
\end{scope}
\node[below=2mm of trapbox, red!60!black, font=\scriptsize\bfseries, align=left,
      text width=6.6cm] (traplabel)
      {Prop.~\ref{prop:conditions}, C1: if $W_{ij}(t)\to0$ before this check
      first sees $W_{ij}^{*}$, then $\Delta_{ij}\approx0$ regardless of the
      true signal --- \textbf{tighten} fires every epoch,
      $\rho_t=\rho_0\kappa^t\to\infty$.};

\end{tikzpicture}
\caption{\textbf{The defeasible-prior enforcement loop this paper studies.} Once an edge $(i,j)$ is
marked forbidden ($M_{ij}=1$), every training epoch alternates a gradient step on the
structure-learning objective $\Laug$ (Eq.~\ref{eq:laug}) with a counterfactual check: does removing
edge $(i,j)$ right now cost the fit at least $\delta$ in loss (Eq.~\ref{eq:dadu})? If so, the
constraint is \emph{relaxed}. If not, it is \emph{tightened}, and the penalty weight $\rho$ grows
geometrically regardless of which branch is taken. This loop implements \emph{guide, not bind}
whenever the check can be trusted. The dashed region marks where that trust breaks down:
Proposition~\ref{prop:conditions}'s Condition C1 shows that once the forbidden edge has already
been driven near zero, the counterfactual check becomes uninformative by construction, and the
loop takes the \textbf{tighten} branch every remaining epoch even when $(i,j)$ was a true,
data-supported edge---the early suppression trap of Section~\ref{sec:mechanism}.}
\label{fig:setup}
\end{figure}

% =====================================================================
\section{Obstruction I: The Suppression Mechanism}
\label{sec:mechanism}

This section states three necessary conditions any data-adaptive relaxation must satisfy to avoid
irreversibly suppressing a wrongly-forbidden true edge, shows DADU violates all
three, and confirms the resulting failure on a fully controlled synthetic instance.

\begin{proposition}[Necessary conditions for data-adaptive relaxation]
\label{prop:conditions}
Let $W_{ij}^*$ denote the unconstrained optimum of $\Lfit$ and let $W_{ij}(t)$ denote the weight at
epoch $t$ under $\rho_t=\rho_0\kappa^t$. For any data-adaptive relaxation operator $\mathcal R$
applied to $\lambda_{ij}$ to prevent irreversible suppression of a data-supported forbidden edge,
the following are necessary under the first-order local analysis this proposition is stated in
(we return to the scope of that qualification in Section~\ref{sec:limitations}).

\smallskip\noindent\textbf{C1 (Pre-suppression evaluation).} $\Delta_{ij}$ must be evaluated at or
near $W_{ij}^*$ before penalty mass accumulates: if instead evaluated at a suppressed
$W_{ij}(t)\approx 0$, then $\Delta_{ij}(t)\approx 0$ for \emph{any} data-generating process, and
$\mathcal R$ cannot distinguish a genuinely absent edge from a suppressed data-supported one.

\smallskip\noindent\textbf{C2 (Rate condition).} The growth rate $\kappa$ must satisfy
\begin{equation}
    \kappa < 1 + \frac{\eta_r\cdot \partial\Lfit/\partial W_{ij}
    \big|_{W_{ij}^*}}{\rho_0\cdot W_{ij}^*}.
    \label{eq:rate_condition}
\end{equation}
If violated, the penalty accumulates faster than any additive relaxation step can counteract,
regardless of the signal strength $\Delta_{ij}$. Eq.~\ref{eq:rate_condition} is derived from a
first-order expansion of $\Delta_{ij}$ around $W_{ij}^*$, so it is necessary \emph{under that local
approximation}; a fully optimizer-agnostic version is not established here.

\smallskip\noindent\textbf{C3 (Operator condition).} $\mathcal R$ must be able to move
$\lambda_{ij}$ against the direction of tightening, via a slack-variable formulation decoupling the
dual variable from the penalty, or an explicit ceiling on $\lambda_{ij}$; the plain ascent step in
Eq.~\ref{eq:dual_ascent} provides neither.
\end{proposition}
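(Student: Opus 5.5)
I would prove each of C1--C3 separately by contraposition: assume the condition fails and show that some data-supported forbidden edge, with $W_{ij}^*\neq0$, is irreversibly suppressed, meaning $\lambda_{ij}(t)\to\infty$ and $W_{ij}(t)\to0$. The whole argument stays in the first-order local regime that the proposition states. There I treat $\Lfit$ along coordinate $(i,j)$ as smooth, with the other coordinates held at their current values, and I use the Bertsekas divergence fact from Section~\ref{sec:bg-alm}: if $c(W)=0$ is incompatible with the unconstrained minimizer, then $\lambda_t\to\infty$ under pure ascent.

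\textbf{C1.} I would Taylor-expand the counterfactual cost around the current weight. Write $\Delta_{ij}(t)=\Lfit(W\mid W_{ij}=0)-\Lfit(W(t))$. Setting $W_{ij}$ from $W_{ij}(t)$ to $0$ gives
\[
\Delta_{ij}(t)=-\partial_{ij}\Lfit\,W_{ij}(t)+O\big(W_{ij}(t)^2\big).
\]
The gradient of $\Lfit$ is bounded on a compact neighbourhood, because $\corr$ and $(I-W)^{-1}$ are smooth wherever $I-W$ is invertible. So $|\Delta_{ij}(t)|\le C|W_{ij}(t)|\to0$. The key point is that this bound holds uniformly over the data-generating process: it holds whether the true weight is zero or not. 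Hence any $\mathcal R$ that depends on $\Delta_{ij}$ only through its value at a suppressed iterate receives the same input, up to $O(|W_{ij}(t)|)$, in both scenarios. It therefore cannot separate them for any fixed threshold $\delta>0$ once $|W_{ij}(t)|<\delta/C$.

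\textbf{C2.} I would compare the per-epoch change in $\lambda_{ij}$ under the two competing updates, evaluated near $W_{ij}^*$. The tightening increment is $\rho_t|W_{ij}|=\rho_0\kappa^t|W_{ij}|$, which near $W_{ij}^*$ is about $\rho_0\kappa^tW_{ij}^*$. The relaxation decrement is $\eta_r\Delta_{ij}$. By the same first-order expansion, now around $W_{ij}^*$, this decrement is about $\eta_r\,\partial\Lfit/\partial W_{ij}|_{W^*}$ times a unit-scale displacement. The useful quantity is the ratio of consecutive net increments. Net dual growth stays bounded only if the geometric increase of the tightening term, $(\kappa-1)\rho_0\kappa^{t}W^*_{ij}$, is dominated by the relaxation term at the first epoch. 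Rearranging the inequality at $t=0$ gives Eq.~\ref{eq:rate_condition}. If that inequality fails, $\lambda_{ij}(t)\ge\lambda_{ij}(0)+\sum_s(\rho_0\kappa^sW^*_{ij}-\eta_r\Delta_{ij})$, and this sum diverges geometrically whatever the size of $\Delta_{ij}$. Divergence of $\lambda_{ij}$ then drives $W_{ij}\to0$, which puts us in the C1 regime.

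I expect this step to be the main obstacle. The derivation is heuristic in two places. First, $\Delta_{ij}$ must be held near its value at $W^*$ while $W$ itself is moving. Second, the "unit displacement" normalization has to be fixed. I would state both explicitly as the local-approximation hypotheses, rather than try to make them optimizer-agnostic.

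\textbf{C3.} I would argue structurally. Under Eq.~\ref{eq:dual_ascent}, every increment $\rho|W_{ij}|M_{ij}$ is nonnegative, so $\lambda_{ij}$ is monotone nondecreasing and $\rho$ grows without bound. Any operator that only modulates the ascent step, for example by pausing it, can therefore never decrease $\lambda_{ij}$. Escaping the suppressed basin then requires either a separate decreasing direction, such as a slack variable $s_{ij}$ that absorbs the constraint violation so that $\lambda$ is no longer tied to $\rho$, or a cap $\lambda_{ij}\le\bar\lambda$. With a cap, the penalty gradient on $W_{ij}$ stays bounded by $\bar\lambda+\rho c$, and a strong data gradient can win. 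Combining the three parts, each condition's failure independently yields suppression, which establishes necessity.
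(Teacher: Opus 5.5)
Your decomposition matches the paper's (Appendix~\ref{app:prop1_proof}). C1 rests on zeroing an already-small coordinate barely moving $\Lfit$; your data-independent bound $|\Delta_{ij}(t)|\le C|W_{ij}(t)|$ is a useful sharpening of the paper's ``$\approx$''. C2 compares the relaxation step with the $(\kappa-1)$-scaled tightening term at $t=0$. C3 uses monotonicity of plain ascent.

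The step that fails is the one you add beyond the paper: the divergence sum in C2.
\begin{itemize}
\item The bound $\lambda_{ij}(t)\ge\lambda_{ij}(0)+\sum_s(\rho_0\kappa^sW^*_{ij}-\eta_r\Delta_{ij})$ diverges for \emph{every} $\kappa>1$ once $\Delta_{ij}$ is bounded. It therefore never uses the failure of Eq.~\ref{eq:rate_condition}.
\item For the same reason, the $t=0$ comparison cannot decide whether net dual growth stays bounded. A fixed relaxation step loses to $\rho_0\kappa^tW^*_{ij}$ at some later epoch, however the first epoch comes out.
\item The sum holds $|W_{ij}(s)|\approx W^*_{ij}$ for all $s$ while concluding $W_{ij}\to0$.
\item $\lambda_{ij}\to\infty$ is the wrong notion of suppression in any case. $W_{ij}=0$ is feasible, and $\lambda_{ij}|W_{ij}|$ acts as an exact penalty. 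Once $\lambda_{ij}$ exceeds the gradient of the rest of the objective at $W_{ij}=0$, the edge stays at zero with $\lambda_{ij}$ finite. That is the plateau in Figure~\ref{fig:suppression}.
\end{itemize}
The paper claims only that, when Eq.~\ref{eq:rate_condition} fails, tightening outpaces relaxation from the first epoch. That local statement is all the comparison supports, and your C2 should stop there.

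A few further points.

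The obstacle to flag in C2 is more basic than the two you list. $W^*$ minimizes the smooth $\Lfit$ at an interior point, so $\partial\Lfit/\partial W_{ij}|_{W^*}=0$. The linear term you expand on vanishes, and the counterfactual there is second order (compare $2r^2$ in Lemma~\ref{lem:tie}). Read literally, Eq.~\ref{eq:rate_condition} then says $\kappa<1$. The paper shares this gap; in Appendix~\ref{app:cor1_proof} it in effect replaces the gradient by the curvature $\alpha$. You should state that replacement as an explicit hypothesis rather than call it a first-order expansion.

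Your unit-displacement normalization is exactly what produces the $W^*_{ij}$ in the denominator of Eq.~\ref{eq:rate_condition}. With the paper's $\mathrm{relax}(0)\approx\eta_r\,\partial\Lfit/\partial W_{ij}\cdot W^*_{ij}$ (Eq.~\ref{eq:p1c2_1}), that factor cancels, as Eq.~\ref{eq:p1c2_3} shows.

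In C3, $\bar\lambda+\rho c$ is not a bound while $\rho$ ramps geometrically. A cap therefore lets the weight leave zero but not return to $W^*_{ij}$. This does not affect necessity, which only needs the monotonicity you establish.
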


\begin{proof}
See Appendix~\ref{app:prop1_proof}.
\end{proof}

\begin{corollary}[DADU violates all three conditions simultaneously]
\label{cor:dadu_failure}
The Data-Adaptive Dual Update, DADU (Eq.~\ref{eq:dadu}), violates C1, C2, and C3 of
Proposition~\ref{prop:conditions} simultaneously, for every threshold $\delta>0$, once suppression
begins.
\end{corollary}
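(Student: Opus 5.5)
The plan is to check each of the three conditions of Proposition~\ref{prop:conditions} directly against the update rule in Eq.~\ref{eq:dadu}. We treat ``once suppression begins'' as the regime in which $|W_{ij}(t)|\le\epsilon$ for some small $\epsilon$ and all later $t$. We take the necessity statements of the Proposition as given and only exhibit the violation in each case. The argument is organized so that the violation of C1 drives the other two. Throughout, $\delta>0$ is arbitrary and fixed.

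\emph{C1.} DADU evaluates $\Delta_{ij}$ at the current iterate $W_{ij}(t)$, not at $W_{ij}^*$, and it does so only after the tightening step of Eq.~\ref{eq:dual_ascent} has been applied under $\rho_t=\rho_0\kappa^t$. We use that $\Lfit$ is smooth in $W_{ij}$ near the iterate, since $\Sigma_\mathrm{SEM}$ is smooth wherever $I-W$ is invertible and $\corr(\cdot)$ is smooth on positive-definite matrices. A mean-value bound then gives
\[
|\Delta_{ij}(t)| = \big|\Lfit(W\mid W_{ij}=0)-\Lfit(W)\big| \le L\,|W_{ij}(t)| \le L\epsilon,
\]
where $L$ bounds $|\partial\Lfit/\partial W_{ij}|$ on a neighbourhood of the iterate. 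The constant $L$ is uniform because the iterates stay in a compact set on which $I-W$ is invertible, and we assume this explicitly. Choosing $\epsilon<\delta/L$ makes $\Delta_{ij}<\delta$ regardless of the data-generating process. So DADU evaluates in exactly the regime C1 forbids, and its check cannot fire.

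\emph{C3.} Once $\Delta_{ij}<\delta$, Eq.~\ref{eq:dadu} executes the plain ascent step of Eq.~\ref{eq:dual_ascent}. That step is monotone non-decreasing in $\lambda_{ij}$, and DADU has no slack variable and no ceiling on $\lambda_{ij}$. Its only downward move is the relax branch, which has just been shown unreachable. Hence, in the suppression regime, $\mathcal R$ cannot move $\lambda_{ij}$ against tightening, which violates C3.

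\emph{C2.} This is the step I expect to be the main obstacle, because Eq.~\ref{eq:rate_condition} is stated at $W_{ij}^*$, whereas DADU's effective relaxation is evaluated at the suppressed iterate. I would first linearize the relaxation step, $\eta_r\Delta_{ij}(t)\approx\eta_r\,\partial\Lfit/\partial W_{ij}\,W_{ij}(t)$. I would then compare it with the penalty increment, which grows like $\rho_0\kappa^t$ while $\Delta_{ij}(t)\le L\epsilon$ stays bounded, or even shrinks as $W_{ij}(t)\to0$. Rewriting C2 with the DADU-evaluated quantities, its right-hand side collapses toward $1$ because the effective relaxation numerator vanishes along suppression, and any $\kappa>1$ then exceeds it. The delicate point is to make clear that this is the same first-order reading under which the Proposition itself is stated, not a new claim. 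Concretely, the cumulative relaxation $\sum_t\eta_r\Delta_{ij}(t)$ is zero, or at most bounded, whereas the cumulative tightening $\sum_t\rho_t|W_{ij}(t)|$ grows without bound whenever $|W_{ij}(t)|$ does not decay faster than $\kappa^{-t}$. If I cannot secure that decay rate, I would instead bound the ratio of relax mass to penalty mass, and show that it is $o(1)$ under geometric ramping.

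Finally, none of the three arguments used any property of $\delta$ beyond $\delta>0$, and all three hold on the same suppression regime. The three violations are therefore simultaneous for every threshold.
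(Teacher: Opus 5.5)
Your C1 and C3 steps follow the paper's argument (Appendix~\ref{app:cor1_proof}). $\Delta_{ij}$ is read at the suppressed live iterate, so the relax branch of Eq.~\ref{eq:dadu} is unreachable for any $\delta>0$. The update then degenerates to the monotone ascent step of Eq.~\ref{eq:dual_ascent}, which has neither slack nor ceiling. Your bound $|\Delta_{ij}(t)|\le L|W_{ij}(t)|$ usefully sharpens the paper's ``$\approx0$'' and makes explicit that the required suppression depth $\epsilon<\delta/L$ depends on $\delta$.

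The gap is C2. Eq.~\ref{eq:rate_condition} is a static inequality on $(\kappa,\rho_0,\eta_r)$ and on $\partial\Lfit/\partial W_{ij}$ and $W_{ij}$ at $W^*_{ij}$. It involves neither $t$ nor $\delta$ nor whether suppression has begun, so no property of the suppressed trajectory can show that it fails. The paper checks it at $W^*_{ij}$. It substitutes DADU's schedule $\kappa=1.05$, $\rho_0=0.1$, $\eta_r=0.01$ and writes the derivative at $W^*_{ij}$ as $\alpha$. The right-hand side becomes $1+\alpha/(10w_0)$, and violation reduces to $\alpha\le0.5w_0$ for $w_0\in[0.3,0.6]$; so even there the violation is conditional on the size of that derivative. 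Re-anchoring C2 at $W_{ij}(t)\approx0$ is not ``the same first-order reading'' as the Proposition, whose expansion is around $W^*_{ij}$. The point you flag as delicate therefore cannot be settled that way.

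The re-anchored inequality also does not collapse as you claim. If you keep $\rho_0$ as in Eq.~\ref{eq:rate_condition}, the denominator $\rho_0 W_{ij}(t)$ vanishes along with the derivative, and their ratio need not go to zero. Take the paper's two-node example once the reverse edge carries weight $w_0$ (Figure~\ref{fig:suppression}). Near $W_{12}=0$ one has $\Lfit\approx 2W_{12}^2/(1+w_0^2)$, so the ratio tends to $4/(1+w_0^2)\approx3.07$ at $w_0=0.55$. Your rewritten right-hand side then tends to about $1.31>\kappa=1.05$, so the rewritten condition holds rather than fails. A collapse to $1$ appears only in two cases: you replace $\rho_0$ by $\rho_t$, or you replace the derivative by the vanishing signal $\Delta_{ij}(t)$ (equivalently, use that the relax branch never fires). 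The latter is your C1/C3 argument again, not a rate condition.

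Your cumulative fallback needs $\sum_t\rho_t|W_{ij}(t)|\to\infty$, which is not what the paper observes: $\lambda_{12}$ plateaus once $W_{12}\approx0$, so those increments die out. What closes C2 is a schedule-level argument at $W^*_{ij}$, such as the paper's substitution. A cleaner one takes $W^*$ literally as an unconstrained minimizer. Then $\partial\Lfit/\partial W_{ij}|_{W^*}=0$, the right-hand side of Eq.~\ref{eq:rate_condition} is exactly $1$, and every $\kappa>1$ violates it for every $\delta$. This also shows that the literal C2 rules out any geometric ramp, which the paper sidesteps by identifying that derivative with the curvature $\alpha$.
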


\begin{proof}
See Appendix~\ref{app:cor1_proof}.
\end{proof}

The trap follows from the asymmetry between exponential tightening and linear relaxation. DADU's
relaxation step subtracts a quantity proportional to $W_{ij}(t)$; once $W_{ij}(t)\approx0$ this
vanishes. The suppression window closes at approximately
\begin{equation}
    T^* \approx \frac{\log(\alpha w_0/\rho_0)}{\log\kappa},
    \label{eq:window}
\end{equation}
where $w_0=W^*_{ij}$ is the unconstrained edge weight and $\alpha$ is the local curvature of
$\Lfit$ at $W^*_{ij}$; this is a first-order estimate, not a tight bound. After epoch $T^*$,
$\Delta_{ij}$ remains near zero for all subsequent epochs regardless of $\delta$, $\eta_r$, or
causal signal strength: recovery within any finite training horizon is impossible.
Section~\ref{sec:empirical} confirms this across graphs from 4 to 32 nodes: under a
wrongly-forbidden true edge, DADU suppresses it in 87--97\% of trials depending on graph size.

\subsection{Independent Synthetic Verification}
\label{sec:mechanism-synthetic}

The dynamics above are proved for the general case; here we watch them happen on a fully
controlled instance, with no additional variables and no observation noise beyond the model's own.

The true edge is $1\to2$ with weight $w_0=0.55$. Under the correct prior (the genuinely-absent edge
$2\to1$ forbidden), training recovers $W_{12}=0.550$ against the true $0.550$. Under the
wrong-forbidden condition (the true edge $1\to2$ forbidden instead), Figure~\ref{fig:suppression}
shows what Corollary~\ref{cor:dadu_failure} predicts happening in real time: $W_{12}(t)$ is driven
to $\Delta_{12}\approx0$ by epoch~14, and a $\delta$-sweep over $\{0.01,0.05,0.10,0.20,0.50\}$
changes the final outcome by less than $10^{-3}$---the failure genuinely does not depend on where
the threshold is set.

\begin{figure}[htbp]
\centering
\includegraphics[width=\textwidth]{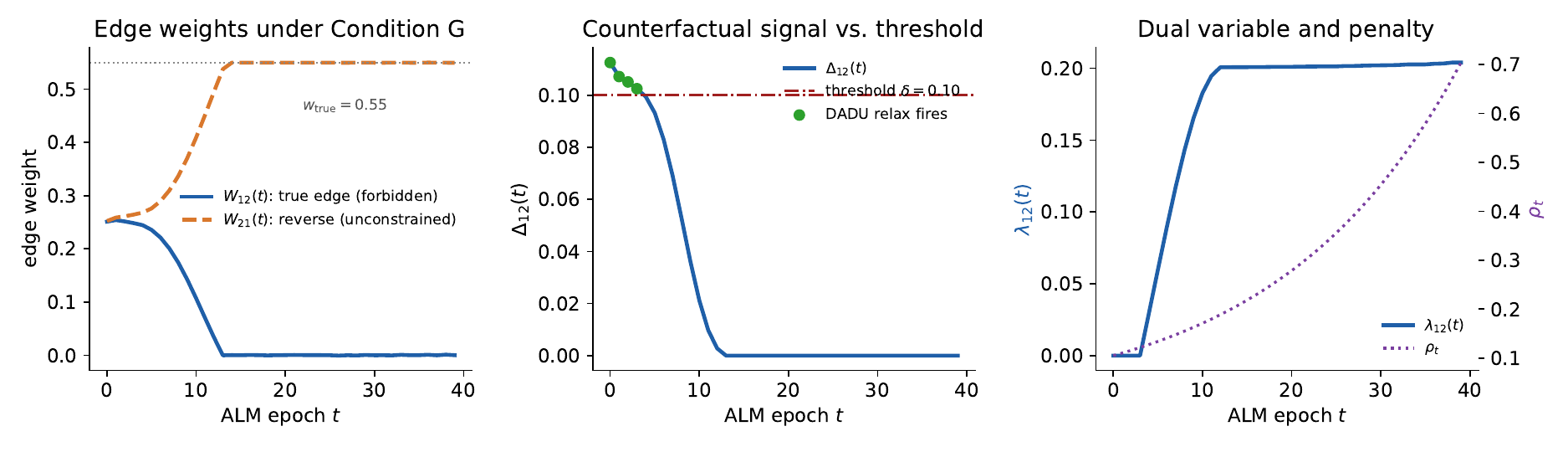}
\caption{\textbf{The early suppression trap on a fully controlled synthetic instance}
(Section~\ref{sec:mechanism-synthetic}), true edge $1\to2$ with $w_0=0.55$, the wrong-forbidden
condition, $\delta=0.10$. \emph{Left:} the forbidden true edge $W_{12}(t)$ collapses to zero by
epoch 14 while the unconstrained reverse edge $W_{21}(t)$ grows to the same magnitude as
$w_0$---Lemma~\ref{lem:tie}'s tie appearing inside a live optimization trajectory. \emph{Center:}
the counterfactual signal $\Delta_{12}(t)$ against the threshold $\delta$; green markers show where
DADU's relax branch fires, all early, before the reverse edge has taken over the fit.
\emph{Right:} the dual variable $\lambda_{12}(t)$ and penalty $\rho_t$; $\lambda_{12}$ plateaus once
$W_{12}\approx0$, since tightening a weight already near zero adds almost nothing further.}
\label{fig:suppression}
\end{figure}
\FloatBarrier

The reverse edge $W_{21}(t)$ converges to $0.5496$---the same magnitude as $w_0$, not an arbitrary
value. This is Lemma~\ref{lem:tie} manifesting inside an actual optimization trajectory: because
the correlation implied by a weight $w$ is direction-symmetric, once the true edge is suppressed,
$\Lfit$ is satisfied just as well by routing the same weight through the wrong direction, and the
unconstrained reverse edge does exactly that. At $\delta=0.01$, the most permissive threshold
tested, DADU's relax branch fires in 24 of 40 epochs, and the true edge is suppressed anyway: by
the time relaxation triggers, the reverse edge has already absorbed the job of fitting the
correlation, leaving no gradient pressure to pull $W_{12}$ back up. Relaxation firing is not the
same as relaxation working. Section~\ref{sec:empirical} shows the same pattern holds at scale.

% =====================================================================
\section{Obstruction II: The Identifiability Limit}
\label{sec:identifiability}

Suppose C1--C3 were satisfied: a relaxation operator evaluates $\Delta_{ij}$ at $W^*_{ij}$, before
any suppression, and can freely move $\lambda_{ij}$ down. Is that enough to recover a
wrongly-forbidden true edge? This section shows it is not, under the fitting objective this
section's testbed uses, and traces the failure to a specific, correctable cause.

\subsection{The Isolated-Edge Setting}

We isolate the mechanism on the simplest case where it is exact: two variables $z_1,z_2$ connected
by at most one directed edge. Write the forward model as $z_1=\varepsilon_1,\ z_2 = w
z_1+\varepsilon_2$ and the reverse model as $z_2=\varepsilon_2',\ z_1=w' z_2+\varepsilon_1'$, with
all exogenous noise i.i.d.\ $\mathcal N(0,1)$. Direct computation gives implied covariances
\begin{equation}
    \Sigma_\to(w) = \begin{pmatrix}1 & w \\ w & 1+w^2\end{pmatrix}, \qquad
    \Sigma_\leftarrow(w') = \begin{pmatrix}1+w'^2 & w' \\ w' & 1\end{pmatrix},
    \label{eq:two_node_cov}
\end{equation}
and correlations $\corr_\to(w)=\corr_\leftarrow(w') = w/\sqrt{1+w^2}$---the \emph{same} function of
the free parameter regardless of direction. We write $g(w) := w/\sqrt{1+w^2}$, an odd, strictly
increasing bijection $\mathbb R\to(-1,1)$.

\subsection{Lemma 1: The Tie Under Correlation Matching}

\begin{lemma}[Exact directional tie under correlation matching]
\label{lem:tie}
Let $r\in(-1,1)\setminus\{0\}$ be a target correlation and let $w_0 = g^{-1}(r)$. Restricted to the
isolated pair $\{1,2\}$, $\Lfit$ from Eq.~\ref{eq:lfit} satisfies:
\begin{enumerate}[label=(\roman*),leftmargin=1.6em]
  \item Both the forward model at $w=w_0$ and the reverse model at $w'=w_0$ attain the exact
    global minimum $\Lfit=0$.
  \item Consequently $\Delta_{1\to2} = \Delta_{2\to1} = 2r^2$, exactly, in closed form.
\end{enumerate}
\end{lemma}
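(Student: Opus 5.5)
The plan is a direct computation on the $2\times2$ correlation matrices, using the closed forms in Eq.~\ref{eq:two_node_cov}. The key observation is that for any $2\times 2$ covariance the correlation matrix has the form
\[
\begin{pmatrix}1 & \rho\\ \rho & 1\end{pmatrix},
\]
with unit diagonal whatever the covariance was. Consequently the Frobenius loss of Eq.~\ref{eq:lfit}, restricted to the pair $\{1,2\}$, reduces to $2(\rho_{\mathrm{model}}-r)^2$. The factor $2$ comes from the two symmetric off-diagonal entries, and the diagonal contributions vanish identically.

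For part (i), I would compute $\rho_{\mathrm{model}}$ in each direction. For the forward model it is $w/\sqrt{1\cdot(1+w^2)}=g(w)$. For the reverse model it is $w'/\sqrt{(1+w'^2)\cdot 1}=g(w')$, so it is the same function $g$. With $w_0=g^{-1}(r)$, which is well defined since $g$ is a bijection onto $(-1,1)$, both models at parameter $w_0$ give $\rho_{\mathrm{model}}=r$ and hence $\Lfit=0$. Since $\Lfit\ge 0$ as a squared norm, this value is the global minimum.

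For part (ii), I would use the definition $\Delta_{ij}=\Lfit(W\mid W_{ij}=0)-\Lfit(W)$, evaluated at each optimum. Setting the only edge to zero yields $\Sigma=I$, whose correlation has off-diagonal entry $0$. The loss then becomes $2(0-r)^2=2r^2$, and subtracting the optimal value $0$ gives $\Delta=2r^2$. The same computation applies verbatim in the reverse direction, since the zeroed model is $I$ in both cases. I would also note that $\Delta>0$ because $r\ne0$.

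The main obstacle is interpretive rather than computational. It concerns what ``restricted to the isolated pair'' means for $W$ when the reverse entry is present. I would fix the convention that in the forward model the reverse entry is zero, and vice versa, so that each $\Delta$ is computed in a one-edge model. I would also make explicit that the sample correlation is taken to equal $r$ exactly, that is, at the population level. With these conventions stated, every remaining step is a two-line calculation.
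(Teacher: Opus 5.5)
Your proposal is correct and follows the paper's proof essentially line for line. Both arguments reduce $\Lfit$ on the pair to $2(\rho_{\mathrm{model}}-r)^2$ using the unit diagonal, observe that both directions give $\rho_{\mathrm{model}}=g(\cdot)$, and evaluate at $w_0$ and at $0$ to get $\Lfit=0$ and $\Delta=2r^2$. Your phrasing is slightly more careful than the paper's: it is the correlation matrices, not $\Sigma_{\mathrm{SEM}}$ itself, that have unit diagonal, and nonnegativity of the squared norm is what makes $0$ the global minimum.
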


\begin{proof}
See Appendix~\ref{app:lemma1_proof}.
\end{proof}

Lemma~\ref{lem:tie} is exact---an algebraic identity in the population limit, not an approximation
from a finite run.\footnote{Every closed-form claim in this section is verified independently, both
symbolically and via an independent numeric optimizer, to machine precision; verification code is
provided in the supplementary material.} No $\delta_\mathrm{pre}$ threshold applied to $\Delta_{ij}$
can ever separate a wrongly-forbidden true edge from its correctly-forbidden reverse, because there
is, by construction, nothing to separate.

\begin{remark}[Why this is not simply Gaussian non-identifiability]
\label{rem:pb}
It is tempting to read Lemma~\ref{lem:tie} as a restatement of the standard fact that linear
Gaussian SEMs are identifiable only up to Markov equivalence \citep{peters2017elements}. It is not.
\citet{petersbuhlmann2014} prove that when every exogenous noise term has \emph{equal}
variance---exactly the model in Eq.~\ref{eq:linear_sem}---the DAG is generically identifiable from
the population \emph{covariance} matrix alone: a variable further downstream accumulates strictly
more marginal variance under equal innovation variance ($\mathrm{Var}(z_2)=1+w^2 \neq 1 =
\mathrm{Var}(z_1)$ whenever $w\neq0$ in Eq.~\ref{eq:two_node_cov}). Our model is therefore
identifiable \emph{in principle}. Lemma~\ref{lem:tie} shows that $\Lfit$ never gets the chance to
use that signal, because it compares $\corr(\cdot)$---which forces both matrices to unit diagonal
before comparison---rather than the raw covariance. The tie in Lemma~\ref{lem:tie} is therefore a
property of \emph{this fitting objective}, not of the underlying causal model, and predicts that
any objective which does not force unit diagonal should restore separation. We test this prediction
directly, on the exact objectives used by \NOTEARS{}/\DAGMA{} and \GOLEM{}, in
Section~\ref{sec:emp-obstruction2b}.
\end{remark}

\subsection{Lemma 2: Separation Under Covariance Matching}

Remark~\ref{rem:pb} makes a specific, checkable prediction: restoring the discarded variance
information should restore separation. It does, exactly.

\begin{lemma}[Exact directional separation under covariance matching]
\label{lem:separation}
Let the true generating model be forward with weight $w_0\neq0$, so the population covariance is
$\Sigma^* = \Sigma_\to(w_0)$ (Eq.~\ref{eq:two_node_cov}). Define $\Lcov(w) =
\|\Sigma_\mathrm{model}(w) - \Sigma^*\|_F^2$ using the raw (non-normalized) covariance, for either
direction. Then:
\begin{enumerate}[label=(\roman*),leftmargin=1.6em]
  \item $\min_w \Lcov^\to(w) = 0$, attained at $w=w_0$.
  \item For every $w'$, $\Lcov^\leftarrow(w') \geq w_0^4$, hence $\inf_{w'}\Lcov^\leftarrow(w') \geq
    w_0^4 > 0$.
\end{enumerate}
The two directions are therefore separated by a closed-form gap of at least $w_0^4$, with no
simulation required.
\end{lemma}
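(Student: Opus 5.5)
The plan is to compute both losses explicitly from the closed-form covariances in Eq.~\ref{eq:two_node_cov}. For (i), $\Sigma_\to(w)-\Sigma^*$ has entries $0$, $w-w_0$ (twice, off-diagonal), and $w^2-w_0^2$. Hence
\[
\Lcov^\to(w)=2(w-w_0)^2+(w^2-w_0^2)^2 .
\]
This is a sum of squares. It is nonnegative and vanishes exactly at $w=w_0$, so (i) follows at once. The only subtlety is noting that $w=-w_0$ is excluded because the off-diagonal term is then nonzero.

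For (ii), I would write out
\[
\Sigma_\leftarrow(w')-\Sigma^*=\begin{pmatrix} w'^2 & w'-w_0\\ w'-w_0 & -w_0^2\end{pmatrix},
\]
which gives
\[
\Lcov^\leftarrow(w')=w'^4+2(w'-w_0)^2+w_0^4 .
\]
The $(2,2)$ entry contributes the term $w_0^4$ no matter what $w'$ is. The reason is that the reverse model pins $\mathrm{Var}(z_2)=1$, while the truth has $1+w_0^2$. This is exactly the variance asymmetry of Definition~\ref{def:ev-ident}. The other two terms are nonnegative, so $\Lcov^\leftarrow(w')\ge w_0^4$ for every $w'$, and therefore $\inf_{w'}\Lcov^\leftarrow\ge w_0^4>0$ since $w_0\neq0$.

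The gap statement then follows by comparing the two minima: $0$ for the forward direction versus at least $w_0^4$ for the reverse. The infimum is in fact strictly larger and is attained at a $w'\in(0,w_0)$ solving $2w'^3+w'-w_0=0$, but the stated bound does not need that.

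There is no real obstacle here. The only step needing care is computing the reverse-model covariance with the indices oriented correctly, so that the unit variance lands on $z_2$ and $1+w'^2$ on $z_1$. Once that is right, the bound is a one-line drop of nonnegative terms. I would also remark that the argument makes no use of normalization, which is precisely what distinguishes this case from Lemma~\ref{lem:tie}.
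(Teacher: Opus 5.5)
Your proposal is correct and follows the paper's proof essentially line for line. For (i) it uses nonnegativity of the squared Frobenius norm with equality at $w=w_0$; your explicit $2(w-w_0)^2+(w^2-w_0^2)^2$ additionally makes uniqueness visible. For (ii) it expands $\Lcov^\leftarrow(w')=w'^4+2(w'-w_0)^2+w_0^4$ and drops the two nonnegative terms.

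There is one slip, confined to your aside. Setting $4w'^3+4(w'-w_0)=0$ gives the cubic $w'^3+w'-w_0=0$ (as the paper states), not $2w'^3+w'-w_0=0$. Also, the minimizer lies between $0$ and $w_0$, which is $(w_0,0)$ rather than $(0,w_0)$ when $w_0<0$.
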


\begin{proof}
See Appendix~\ref{app:lemma2_proof}.
\end{proof}

The bound $w_0^4$ is not tight: the true minimizer of $\Lcov^\leftarrow$ solves the cubic
$w'^3+w'-w_0=0$ and gives a strictly larger gap. We keep the loose bound because it is closed-form
and sufficient; strict, nonzero separation is all the argument in Section~\ref{sec:what_is_fixed}
requires.

\begin{figure}[htbp]
\centering
\includegraphics[width=0.6\textwidth]{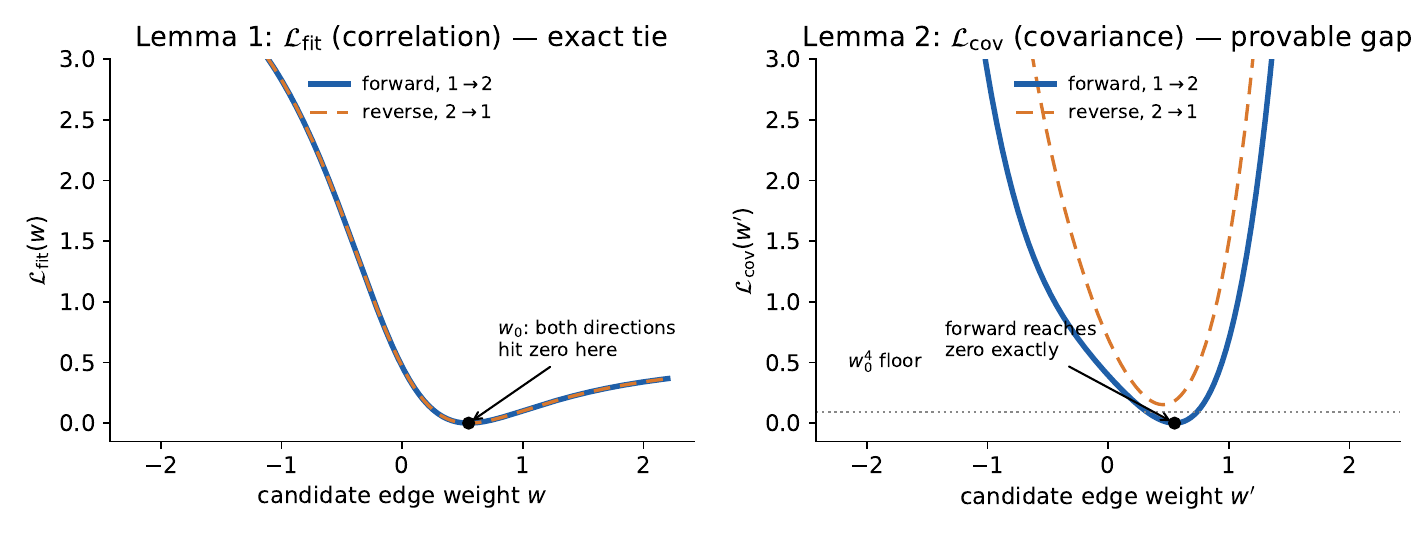}
\caption{\textbf{The two lemmas as one picture.} Both panels plot fitting loss against a single
candidate edge weight, at the true edge's value $w_0=0.55$; the true (forward) direction is solid
blue, the reverse direction dashed orange. \emph{Left} (Lemma~\ref{lem:tie}): under $\Lfit$, the
two curves are the \emph{same function}---they lie exactly on top of each other, both reaching zero
at $w_0$. \emph{Right} (Lemma~\ref{lem:separation}): under $\Lcov$, the two curves separate. The
forward direction still reaches zero exactly at $w_0$; the reverse direction is bounded below by
the $w_0^4$ floor at every candidate weight, including its own best case.}
\label{fig:lemmas}
\end{figure}

\subsection{Proposition 2: Generalizing Beyond the Isolated Pair}

Lemmas~\ref{lem:tie} and~\ref{lem:separation} are exact for the isolated pair. A general $d$-node
graph with multiple true edges does not reduce to this algebra directly, because zeroing one edge
can, through matrix inversion in $\Sigma_\mathrm{SEM}(W)$, shift the fitted values of other edges.
The underlying mechanism nonetheless generalizes, via the standard graphical characterization of
Markov equivalence.

\begin{proposition}[General-graph extension]
\label{prop:general}
Let $G$ be a DAG satisfying Eq.~\ref{eq:linear_sem}, and let $(i,j)$ be a \emph{covered edge}
\citep{chickering2002}: one whose reversal yields a DAG $G'$ in the same Markov equivalence class
as $G$ \citep{vermapearl1990}. Then there exists a reparameterization of $G'$---which may adjust
edges other than $(i,j)$ itself, as Appendix~\ref{app:general} shows on a concrete
example---reproducing $G$'s correlation matrix exactly. Consequently $\Lfit$ cannot separate the
two directions at their respective optima: $\Delta_{ij}(W^*)=\Delta_{ji}(W^*)$.
\end{proposition}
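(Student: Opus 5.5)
The plan is to work with a covered edge in the sense of \citet{chickering2002}: the edge $i\to j$ in $G$ with $\mathrm{pa}_G(j)=\mathrm{pa}_G(i)\cup\{i\}$. Reversing it gives $G'$, which by \citet{vermapearl1990} (Definition~\ref{def:markov-equiv}) has the same skeleton and v-structures. The first step is to pass from graphs to covariances. Write $\Sigma=\Sigma_\mathrm{SEM}(W)$ for $G$'s implied covariance, with $\Sigma_\varepsilon=I$. Markov equivalence means $\Sigma$ also factorizes according to $G'$: regressing each node on its $G'$-parents in $\Sigma$ gives weights $W'$ supported on $G'$ and residual variances $D'=\mathrm{diag}(d'_1,\dots,d'_d)$ with
\[
\Sigma=(I-W')^{-1}D'(I-W')^{-\top}.
\]
Concretely, only the pair $(i,j)$ and their children's regressions change. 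Nodes outside $\{i,j\}$ keep the same parent sets, so for them $W'$ agrees with $W$ and their residual variance stays $1$. In general, however, $d'_i\neq1$ and $d'_j\neq1$. This is exactly the Peters--B\"uhlmann asymmetry from Definition~\ref{def:ev-ident}, so $W'$ by itself is \emph{not} an equal-variance SEM reproducing $\Sigma$.

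The second step uses the fact that $\Lfit$ only sees $\corr(\cdot)$. I will look for an equal-variance $G'$-model $\tilde W$ with $\corr(\Sigma_\mathrm{SEM}(\tilde W))=\corr(\Sigma)$, rather than matching covariances. The construction is to rescale: set $S=D'^{-1/2}$ and $\tilde\Sigma=S\Sigma S$. Then
\[
\tilde\Sigma=(I-\tilde W)^{-1}(I-\tilde W)^{-\top},\qquad \tilde W_{kl}=W'_{kl}\sqrt{d'_k/d'_l},
\]
with the index convention fixed to match Eq.~\ref{eq:linear_sem}. This holds because $S(I-W')^{-1}D'^{1/2}=(I-\tilde W)^{-1}$, where $\tilde W=D'^{-1/2}W'D'^{1/2}$ (transposed appropriately) still has the support of $G'$ and zero diagonal. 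Since diagonal rescaling leaves correlation invariant, $\corr(\tilde\Sigma)=\corr(\Sigma)$. This generalizes the two-node computation in Lemma~\ref{lem:tie}, where $w'=w_0$ is exactly this rescaled weight. The rescaling touches every edge incident to $i$ or $j$, including edges other than $(i,j)$, which is consistent with the statement's caveat and with Appendix~\ref{app:general}.

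The third step draws the consequence. Both $W$ and $\tilde W$ attain $\Lfit=0$, the global minimum, against the population target $\corr(\Sigma)$. Hence the constrained optima $W^*$ (with $j\to i$ forbidden) and $\tilde W$ (with $i\to j$ forbidden) are both exact fits. Next I compare the counterfactual costs $\Delta_{ij}(W^*)=\Lfit(W^*\mid W_{ij}=0)$ and $\Delta_{ji}(\tilde W)$. I would show these agree by pairing the zeroing operations: zeroing the covered edge in either parameterization removes the unique dependence between $i$ and $j$ that is not mediated by their common parents. I would then check that the resulting correlation matrices coincide, using the same diagonal-rescaling argument applied to the submodel with the edge deleted. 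In that submodel $G$ and $G'$ become the same graph, so the argument applies with the same $D'$ restricted appropriately.

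I expect this last equality to be the main obstacle. Deleting $W_{ij}$ in $G$ and deleting $\tilde W_{ji}$ in $G'$ do not obviously give proportional covariances. The rescaled weights on the other edges into $i$ and $j$ differ between the two parameterizations, so after deletion the two models may differ by more than a diagonal scaling. My first attempt is an explicit block computation on the $\{i,j\}\cup\mathrm{pa}(i)$ block. If that fails, I would fall back to defining $\Delta$ at the re-optimized fit, i.e.\ the minimum of $\Lfit$ over the remaining free edges with the edge removed. In that reading both deleted models range over the same graph $G\setminus\{i{-}j\}$, so the two counterfactual minima are literally the same optimization problem, and the equality $\Delta_{ij}=\Delta_{ji}$ is immediate.
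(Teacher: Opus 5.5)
Your route to the correlation identity differs from the paper's, and it is the sounder of the two. The paper's sketch goes from Markov equivalence to ``some reparameterization of $G'$ reproduces $G$'s distribution exactly.'' But $\Lfit$ only ranges over unit-noise models $\Sigma_\mathrm{SEM}(W)=(I-W)^{-1}(I-W)^{-\top}$. Inside that class, no $G'$-model can reproduce $G$'s distribution: that is exactly the Peters--B\"uhlmann asymmetry of Remark~\ref{rem:pb}. In Appendix~\ref{app:general}, for instance, $\mathrm{Var}(z_2)$ is $1.25$ under $G$ but about $1.70$ under the unit-noise $G'$ fit.

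Your regress-then-rescale construction supplies the missing step. Markov equivalence gives a $G'$-factorization with free residual variances $D'$, and the diagonal similarity by $D'^{1/2}$ returns it to the unit-noise class without changing the correlation matrix. It needs no chain of covered reversals; it applies verbatim to any $G'$ Markov equivalent to $G$. It also explains the appendix example rather than just reproducing it. Conditioning on $z_{\mathrm{pa}(i)}$ gives $d'_j=1+c^2$, $d'_i=1/(1+c^2)$, and reverse coefficient $c/(1+c^2)$, so the rescaled reversed weight is exactly $c$, the two-node $w'=w_0$ of Lemma~\ref{lem:tie}. The weights that do change are those on edges from $\mathrm{pa}(i)$ into $i$ and $j$, and on out-edges of $i,j$ to other children; that is where $a',b'$ come from. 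One small slip: before rescaling, the children's regressions do not change; only their rescaled weights do.

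The real problem is your last step, and your suspicion there is right. The block computation cannot succeed, because with $\Delta$ as DADU defines it (Eq.~\ref{eq:dadu}: zero one entry, keep all others), the clause $\Delta_{ij}=\Delta_{ji}$ is false once $d\geq3$. Take the instance of Appendix~\ref{app:general}, with target $(\rho_{12},\rho_{13},\rho_{23})\approx(0.447,0.515,0.625)$:
zeroing $c$ in $G$ gives $(0.447,0.371,0.166)$ and $\Lfit\approx0.462$, while zeroing $c'=c$ in the unit-noise $G'$ fit gives $(0.218,0.515,0.112)$ and $\Lfit\approx0.631$.

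The reason is structural. Deletion in $G$ preserves the correlations of $(z_{\mathrm{pa}(i)},z_i)$, while deletion in $G'$ preserves those of $(z_{\mathrm{pa}(i)},z_j)$. These generally give different matrices. They coincide for the isolated pair of Lemma~\ref{lem:tie}, which is why the two-node case works.

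So drop the first attempt and commit to the fallback, but label it for what it is. What you can prove is:
\begin{itemize}
\item equal optimal fit, i.e.\ $\Lfit=0$ in both directions (the $\Delta_{\mathrm{dir}}=0$ of Section~\ref{sec:emp-obstruction2});
\item equality of \emph{re-optimized} counterfactuals over the common graph $G\setminus\{i{-}j\}$, which is immediate.
\end{itemize}
That is not the quantity DADU computes, nor the one \ARelax{}'s single-coordinate probe (Definition~\ref{def:arelax}) computes. The paper's sketch does not bridge this gap either: it stops at the correlation identity and says that is all it claims. So the literal $\Delta$ clause is unsupported there too.
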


\begin{proof}[Proof sketch]
Two DAGs are Markov equivalent iff they share a skeleton and the same v-structures
\citep{vermapearl1990}, and any two Markov-equivalent DAGs are connected by a sequence of
\emph{covered edge reversals} \citep{chickering2002,meek1995causal}. Markov equivalence guarantees
that \emph{some} reparameterization of $G'$ reproduces $G$'s distribution exactly, hence the same
correlation matrix; it does \emph{not} guarantee that this reparameterization leaves every other
edge weight unchanged, and in general it does not---Appendix~\ref{app:general} verifies this
directly on a three-node covered reversal. What the correlation-matching objective can see is only
the resulting correlation matrix, which \emph{is} identical by construction; that is all
Proposition~\ref{prop:general} claims. Completeness of orientation rules for covered reversals is
established by \citet{andrews2020completeness}.
\end{proof}

Every wrongly-forbidden edge tested in Section~\ref{sec:empirical} is drawn to be a covered edge of
its ground-truth graph precisely so that Proposition~\ref{prop:general} applies to it, not just
Lemma~\ref{lem:tie}'s two-node case.

\subsection{What Is, and Is Not, Fixed}
\label{sec:what_is_fixed}

Lemma~\ref{lem:separation} removes the \emph{static} obstruction: at the level of the fitting
objective alone, covariance matching provably separates a true edge from its reverse, by a margin
we can write down. It does not touch the \emph{dynamic} obstruction from Section~\ref{sec:mechanism}.
If $W_{ij}(t)\to0$ under penalty ramping, any counterfactual---covariance-based or
correlation-based---evaluated at that suppressed point still reads out near zero, by the same
first-order argument as Proposition~\ref{prop:conditions}'s C1. Lemma~\ref{lem:separation}
guarantees the ceiling is removable \emph{if} the counterfactual is evaluated near $W^*$; it says
nothing about whether a relaxation operator that also satisfies C1--C3 would, once combined with a
covariance-based $\Lfit$, actually recover a suppressed edge end to end. Section~\ref{sec:combined}
builds exactly such an operator and tests this combined claim directly.

% =====================================================================
\section{The Combined Mechanism}
\label{sec:combined}

Section~\ref{sec:what_is_fixed} leaves one claim untested: a relaxation operator satisfying
Proposition~\ref{prop:conditions}'s three necessary conditions, combined with covariance matching,
might recover a wrongly-forbidden true edge where DADU under correlation matching cannot. This
section builds such an operator, \ARelax{}, states precisely which conditions it satisfies and how,
and Section~\ref{sec:empirical} reports what happens when it is run.

\subsection{Construction}
\label{sec:arelax-construction}

\ARelax{} replaces DADU's counterfactual check and dual update (Eq.~\ref{eq:dadu}) with three
changes, one per necessary condition. Fix a probe length $P\geq1$, a probe step size $\eta_p>0$, a
probe interval $T_p\geq1$, and a dual ceiling $\Lambda_{\max}>0$.

\begin{definition}[\ARelax{}]
\label{def:arelax}
At every epoch $t$ with $t\bmod T_p=0$ (and at $t=0$), given the live iterate $W(t)$:
\begin{enumerate}[label=(\arabic*),leftmargin=1.8em]
  \item \textbf{Probe.} Freeze every entry of $W(t)$ except $W_{ij}$, and take $P$ local gradient
    steps on that coordinate alone, initialized at $W_{ij}(t)$, minimizing $\Lfit$ with all other
    entries held fixed. Write $W_{ij}^{\mathrm{probe}}(t)$ for the result.
  \item \textbf{Local gradient.} Let $g(t) := \partial\Lfit/\partial W_{ij}$, evaluated at $W(t)$
    with $W_{ij}$ replaced by $W_{ij}^{\mathrm{probe}}(t)$.
  \item \textbf{Counterfactual.} Let $\Delta^{\mathrm{probe}}_{ij}(t) := \Lfit(W(t)\mid W_{ij}=0) -
    \Lfit\big(W(t)\mid W_{ij}=W_{ij}^{\mathrm{probe}}(t)\big)$.
\end{enumerate}
At every epoch (reusing the most recent probe when $t\bmod T_p\neq0$), update
\begin{align}
    \kappa_{\mathrm{eff}}(t) &:= \min\!\left(\kappa,\ 1 + \frac{\eta_r\,|g(t)|}
    {\rho_0\,|W_{ij}^{\mathrm{probe}}(t)|+\epsilon}\right), \label{eq:arelax_kappa}\\
    \lambda_{ij}(t{+}1) &:= \mathrm{clip}\!\left(
    \begin{cases}
        \lambda_{ij}(t) + \rho_t\,|W_{ij}(t)| & \text{if } \Delta^{\mathrm{probe}}_{ij}(t) < \delta,\\
        \max\big(0,\ \lambda_{ij}(t) - \eta_r\Delta^{\mathrm{probe}}_{ij}(t)\big) & \text{if }
        \Delta^{\mathrm{probe}}_{ij}(t) \geq \delta,
    \end{cases}\ 0,\ \Lambda_{\max}\right), \label{eq:arelax_lambda}\\
    \rho_{t+1} &:= \kappa_{\mathrm{eff}}(t)\cdot\rho_t. \label{eq:arelax_rho}
\end{align}
\end{definition}

\subsection{\ARelax{} Operationalizes C1--C3}

\begin{proposition}[\ARelax{} operationalizes Proposition~\ref{prop:conditions}]
\label{prop:arelax}
For every epoch $t$, \ARelax{} (Definition~\ref{def:arelax}) satisfies:
\begin{enumerate}[label=(\alph*),leftmargin=1.8em]
  \item \emph{[C1, local form]} $\Delta^{\mathrm{probe}}_{ij}(t)$ is evaluated at
    $W_{ij}^{\mathrm{probe}}(t)$, obtained by $P\geq1$ local descent steps from the live weight
    $W_{ij}(t)$, rather than at $W_{ij}(t)$ itself.
  \item \emph{[C2]} $\kappa_{\mathrm{eff}}(t) \leq 1 + \eta_r\,|g(t)| /
    \big(\rho_0\,|W_{ij}^{\mathrm{probe}}(t)|\big)$, i.e.\ Proposition~\ref{prop:conditions}'s
    necessary bound (Eq.~\ref{eq:rate_condition}), evaluated at the probed point.
  \item \emph{[C3]} $\lambda_{ij}(t) \leq \Lambda_{\max}$ for every $t$.
\end{enumerate}
\end{proposition}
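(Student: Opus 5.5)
The plan is to verify the three clauses one at a time, each directly from Definition~\ref{def:arelax}. None of them needs the first-order analysis behind Proposition~\ref{prop:conditions}. That proposition only supplies the \emph{targets}, namely Eq.~\ref{eq:rate_condition} and the requirement that $\lambda_{ij}$ be movable against tightening. The statement to be shown is that the construction meets those targets at every epoch $t$. I expect each clause to follow by unwinding the definitions, with one induction for (c).

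For (a), I would just read off steps (1)--(3) of the definition. When $t\bmod T_p=0$, the probe runs $P\geq1$ coordinate-wise descent steps on $\Lfit$ starting from $W_{ij}(t)$. $\Delta^{\mathrm{probe}}_{ij}(t)$ is then defined as a difference of $\Lfit$ values with $W_{ij}$ set to $W_{ij}^{\mathrm{probe}}(t)$, not to $W_{ij}(t)$. For the remaining epochs, the definition reuses the most recent probe, so the claim holds with ``the most recent probed point'' in place of the current one. I would state this explicitly, since it is the sense in which (a) holds ``for every epoch.''

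For (b), the plan is a two-line monotonicity argument. First, $\min(\kappa,x)\leq x$. Second, since $\epsilon>0$ and $\rho_0|W_{ij}^{\mathrm{probe}}(t)|\geq0$, we have
\[
\frac{\eta_r|g(t)|}{\rho_0|W_{ij}^{\mathrm{probe}}(t)|+\epsilon}\;\leq\;\frac{\eta_r|g(t)|}{\rho_0|W_{ij}^{\mathrm{probe}}(t)|}
\]
whenever $W_{ij}^{\mathrm{probe}}(t)\neq0$. When $W_{ij}^{\mathrm{probe}}(t)=0$, the right-hand side is $+\infty$ (or the bound is vacuous), so the inequality holds trivially.

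The step needing care is matching this to Eq.~\ref{eq:rate_condition}. That condition is written with signed quantities evaluated at $W_{ij}^*$, while \ARelax{} uses absolute values at the probed point. I would argue that the absolute-value form is the natural sign-agnostic reading: the condition only has content when the ratio is positive. I would then say explicitly that (b) is the bound ``evaluated at the probed point,'' exactly as the proposition phrases it, and not a claim about $W_{ij}^*$ itself. This interpretive alignment is where I expect the only real subtlety.

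For (c), I would induct on $t$. The base case needs $\lambda_{ij}(0)\in[0,\Lambda_{\max}]$, which I would take as part of the initialization (standardly $\lambda_{ij}(0)=0$). For the inductive step, Eq.~\ref{eq:arelax_lambda} applies $\mathrm{clip}(\cdot,0,\Lambda_{\max})$ to whichever branch fires, so $\lambda_{ij}(t+1)\leq\Lambda_{\max}$ regardless of the branch or the sign of $\Delta^{\mathrm{probe}}_{ij}(t)$. This ceiling is exactly the second mechanism C3 lists. Finally, I would note that the relax branch can strictly decrease $\lambda_{ij}$, which gives the ``move against tightening'' half of C3.
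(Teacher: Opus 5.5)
Your proposal is correct and follows the paper's proof essentially step for step: (a) by reading off the definition of $\Delta^{\mathrm{probe}}_{ij}(t)$, (b) by $\min(\kappa,x)\le x$ together with dropping the $\epsilon$ from the denominator, and (c) from the clip applied to both branches of the dual update. Your extra care---assuming $\lambda_{ij}(0)\in[0,\Lambda_{\max}]$ at initialization for the $t=0$ case, treating $W_{ij}^{\mathrm{probe}}(t)=0$ separately in (b), and noting that epochs with $t \bmod T_p \neq 0$ reuse the most recent probe in (a)---makes explicit points the paper's proof leaves implicit, without changing the argument.
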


\begin{proof}
Given: $\ARelax{}$ as in Definition~\ref{def:arelax}, with $\Delta_{ij}^{\mathrm{probe}}(t)$,
$g(t)$, $\kappa_{\mathrm{eff}}(t)$, and $\lambda_{ij}(t{+}1)$ defined by
Eqs.~\ref{eq:arelax_kappa}--\ref{eq:arelax_rho}.

\smallskip\noindent(a) By construction, $\Delta_{ij}^{\mathrm{probe}}(t)$ is a function of
$W_{ij}^{\mathrm{probe}}(t)$, not of $W_{ij}(t)$:
\begin{equation}
    \Delta^{\mathrm{probe}}_{ij}(t) = \Lfit(W(t)\mid W_{ij}=0) -
    \Lfit\big(W(t)\mid W_{ij}=W_{ij}^{\mathrm{probe}}(t)\big).
\end{equation}
This establishes (a).

\smallskip\noindent(b) For every $\epsilon>0$,
\begin{equation}
    \frac{\eta_r|g(t)|}{\rho_0|W_{ij}^{\mathrm{probe}}(t)|+\epsilon} \leq
    \frac{\eta_r|g(t)|}{\rho_0|W_{ij}^{\mathrm{probe}}(t)|},
\end{equation}
so
\begin{equation}
    \kappa_{\mathrm{eff}}(t) = \min\!\left(\kappa,\ 1+\frac{\eta_r|g(t)|}
    {\rho_0|W_{ij}^{\mathrm{probe}}(t)|+\epsilon}\right) \leq 1 +
    \frac{\eta_r|g(t)|}{\rho_0|W_{ij}^{\mathrm{probe}}(t)|}.
\end{equation}
This is exactly Eq.~\ref{eq:rate_condition} evaluated at $W_{ij}^{\mathrm{probe}}(t)$, establishing
(b).

\smallskip\noindent(c) The $\mathrm{clip}(\cdot,0,\Lambda_{\max})$ operation in
Eq.~\ref{eq:arelax_lambda} applies to both branches of the update, so
\begin{equation}
    \lambda_{ij}(t) \leq \Lambda_{\max}\quad\text{for every } t.
\end{equation}
This establishes (c) and completes the proof.
\end{proof}

\begin{remark}[What is, and is not, guaranteed]
\label{rem:arelax_honesty}
Proposition~\ref{prop:conditions}'s $W^*_{ij}$ is the coordinate of the \emph{global} unconstrained
minimizer of $\Lfit$ over all of $W$ jointly. Definition~\ref{def:arelax}'s
$W_{ij}^{\mathrm{probe}}(t)$ is instead a \emph{local, coordinate-wise} descent point, computed
with every other entry of $W$ frozen at its live value $W(t)$---a value that may itself already be
shaped by the ALM penalty on other constraints, the acyclicity regularizer, or an incomplete
training trajectory. \ARelax{} therefore operationalizes C1 in a verifiable, local sense: $\Delta$
is always evaluated at a freshly re-optimized point rather than the stale live weight, but nothing
in Proposition~\ref{prop:arelax} guarantees $W_{ij}^{\mathrm{probe}}(t)$ coincides with the true
global $W^*_{ij}$. Section~\ref{sec:empirical} reports what this gap costs in practice.
\end{remark}

% =====================================================================
\section{Broader Empirical Validation}
\label{sec:empirical}

This section evaluates four things on one shared testbed: whether Obstruction I
(Section~\ref{sec:mechanism}) and Obstruction II (Section~\ref{sec:identifiability}) generalize
beyond the minimal illustrations already given, whether \ARelax{} combined with covariance
matching (Section~\ref{sec:combined}) recovers a wrongly-forbidden edge in practice, and whether
either obstruction is an artifact of the correlation-matching objective our core testbed uses
rather than a property of forbidden-edge enforcement more broadly.

\paragraph{Setup.} We draw random DAGs at $d\in\{4,8,16,32\}$ nodes, edge probability (density)
$\in\{0.15,0.30\}$, noise scale $\sigma\in\{0.5,1.0,2.0\}$, and either equal-variance or
heteroscedastic exogenous noise, matching Eq.~\ref{eq:linear_sem}'s equal-variance assumption in
the former case and testing outside it in the latter. For each of the 48 resulting cells we draw 16
independent graphs, each containing a covered edge (Proposition~\ref{prop:general}) marked
wrongly-forbidden. Each graph's data consists of $n=500$ i.i.d.\ samples used to form $\hat\Sigma$
(Eq.~\ref{eq:emp_cov}). We train under every combination of relaxation operator (DADU or
\ARelax{}) and fitting objective---correlation or covariance for the core results in
Sections~\ref{sec:emp-obstruction1}--\ref{sec:emp-tradeoff}, plus two literature-matched objectives
introduced in Section~\ref{sec:emp-obstruction2b}---for 40 epochs of 8 gradient steps each,
matching Section~\ref{sec:mechanism-synthetic}'s schedule, using Adam
\citep{kingma2015adam} at learning rate $5\times10^{-3}$ with $\rho_0=0.1$, $\kappa=1.05$,
$\delta=0.10$, $\eta_r=0.01$, and $w_f=5.0$ held fixed across every cell in the grid; we return to
what holding $w_f$ fixed costs the covariance objective specifically in
Section~\ref{sec:emp-tradeoff}. \ARelax{}'s own hyperparameters (probe length, probe learning rate,
dual ceiling) are unchanged from Section~\ref{sec:combined} and are restated alongside the rest in
Appendix~\ref{app:synthetic}, together with the covered-edge sampling procedure. The same 16 graphs
are reused across all operator--objective combinations within a cell, so every comparison
below can be made on matched pairs. The correlation/covariance portion of the sweep gives 3{,}072
training runs; the full sweep including the two literature-matched objectives of
Section~\ref{sec:emp-obstruction2b} gives 6{,}144.

\subsection{Obstruction I Generalizes Beyond the Minimal Illustration}
\label{sec:emp-obstruction1}

Table~\ref{tab:suppression_by_d} reports the fraction of trials in which the wrongly-forbidden edge
is suppressed (final weight below 10\% of its true value) under DADU, by graph size and fitting
objective.

\begin{table}[h]
\centering
\caption{Suppression rate (\%) under DADU, by graph size $d$ and fitting objective. Suppression
stays above 87\% for correlation matching and above 45\% for covariance matching at every $d$
tested, confirming Corollary~\ref{cor:dadu_failure} well beyond the two-node illustration of
Section~\ref{sec:mechanism-synthetic}.}
\label{tab:suppression_by_d}
\small
\begin{tabular}{lcccc}
\toprule
\textbf{Objective} & $d=4$ & $d=8$ & $d=16$ & $d=32$ \\
\midrule
Correlation & 91.1 & 95.3 & 97.4 & 87.0 \\
Covariance  & 45.8 & 52.6 & 49.0 & 66.1 \\
\bottomrule
\end{tabular}
\end{table}

Covariance matching suppresses less often than correlation matching at every $d$, and the gap is
largest at small $d$ and narrows at $d=32$. We trace this to a rate-condition effect: at the
unconstrained optimum, the magnitude of $\partial\Lfit/\partial W_{ij}$ under covariance matching
exceeds that under correlation matching by a factor of roughly $1.3\times10^4$ at $d=4$, $4.0\times
10^2$ at $d=8$, and $77$ at $d=16$ (six trials per $d$, equal-variance noise, $\sigma=1$). A larger
gradient makes Eq.~\ref{eq:rate_condition}'s necessary bound easier to satisfy under DADU's fixed
schedule, so covariance matching suppresses less often for reasons Proposition~\ref{prop:conditions}
already predicts---but the ratio itself shrinks quickly with $d$, consistent with the narrowing gap
in Table~\ref{tab:suppression_by_d}. Section~\ref{sec:emp-obstruction2b} asks the natural next
question: does an objective already used in this literature, rather than our own proposed fix, do
any better?

\subsection{Obstruction II Generalizes Beyond the Minimal Illustration}
\label{sec:emp-obstruction2}

We separately fit the true graph and its Proposition~\ref{prop:general}-reversed counterpart to
population convergence (BFGS, independent of any training schedule), reporting the loss gap
$\Delta_{\mathrm{dir}} = \Lfit^{\leftarrow} - \Lfit^{\to}$ at their respective optima.

\begin{table}[h]
\centering
\caption{Directional loss gap $\Delta_{\mathrm{dir}}$ (mean $\pm$ standard error over 96 trials per
cell) under equal-variance noise, by graph size and fitting objective. Correlation ties the two
directions to floating-point precision at every $d$ (Lemma~\ref{lem:tie}); covariance separates
them by a significant, growing margin (Lemma~\ref{lem:separation}).}
\label{tab:direction_gap}
\small
\begin{tabular}{lcccc}
\toprule
\textbf{Objective} & $d=4$ & $d=8$ & $d=16$ & $d=32$ \\
\midrule
Correlation & $-6.8\times10^{-18}$ & $-1.5\times10^{-16}$ & $-1.3\times10^{-15}$ & $-1.5\times10^{-12}$ \\
Covariance  & $2.64\pm0.55$ & $3.16\pm0.80$ & $4.71\pm1.17$ & $11.71\pm6.31$ \\
\bottomrule
\end{tabular}
\end{table}

\begin{figure}[htbp]
\centering
\includegraphics[width=0.85\textwidth]{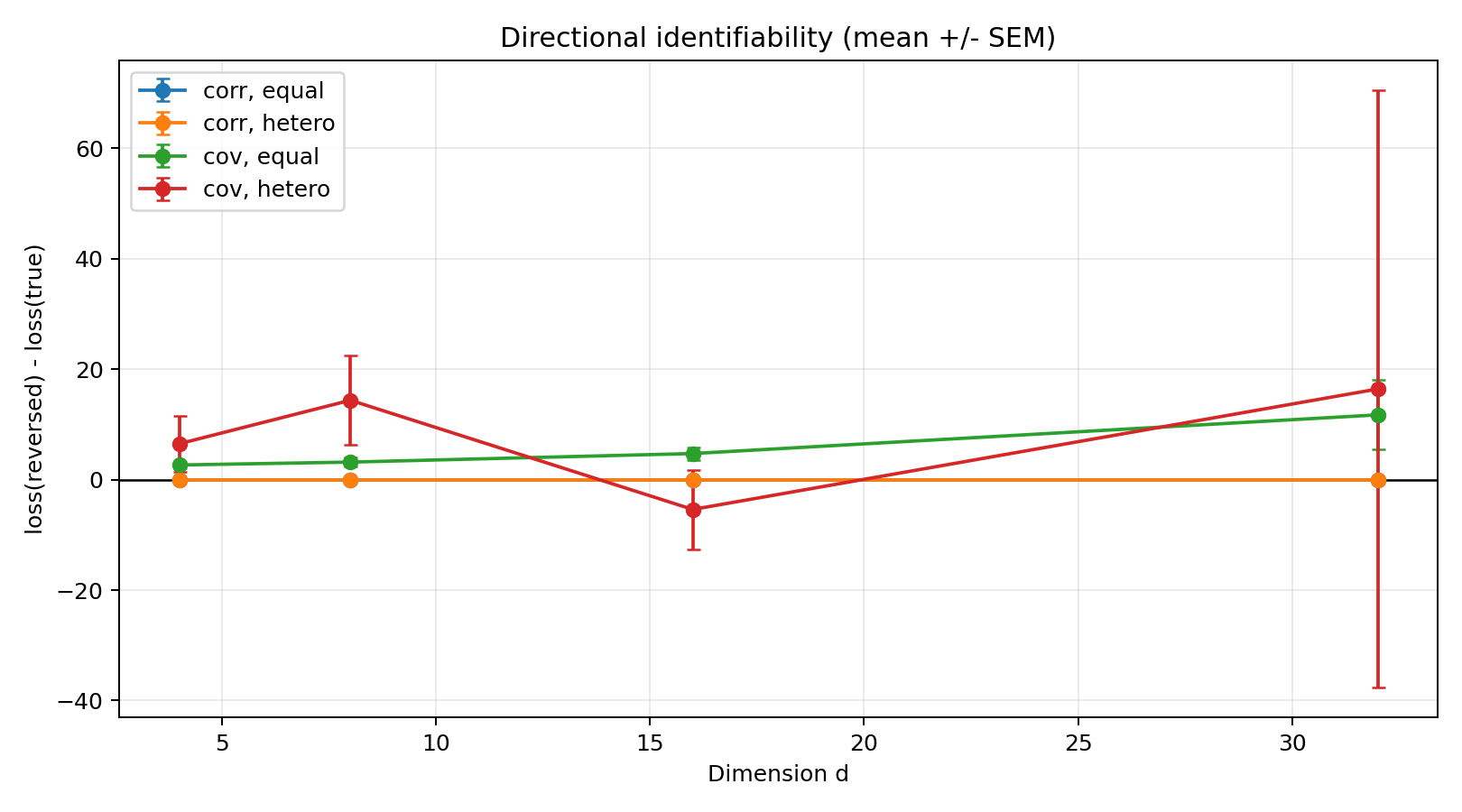}
\caption{\textbf{Directional loss gap by graph size, fitting objective, and noise type (mean
$\pm$ standard error, 96 trials per point: 16 reps $\times$ 2 densities $\times$ 3 noise scales,
matching Table~\ref{tab:direction_gap}'s aggregation).} Correlation (flat, both noise
types) sits at zero across the entire range of $d$ tested, visually confirming Lemma~\ref{lem:tie}'s
tie holds independent of graph size. Covariance under equal-variance noise (matching
Lemma~\ref{lem:separation}'s assumption) is positive and grows with $d$; covariance under
heteroscedastic noise is markedly less stable, consistent with the median-based reporting adopted
in the text for that condition.}
\label{fig:direction_gap}
\end{figure}
\FloatBarrier

The correlation row confirms Lemma~\ref{lem:tie} holds exactly at every graph size tested, not only
in the isolated two-node case: the tie is a property of the objective, and it does not weaken as
the graph grows. The covariance row confirms Lemma~\ref{lem:separation}'s separation is not merely
a two-node artifact either; the gap is significant at every $d$ (weakest at $d=32$, where the
standard error is largest) and grows with graph size.

Under heteroscedastic noise---outside Eq.~\ref{eq:linear_sem}'s equal-variance assumption and
therefore outside Lemma~\ref{lem:separation}'s guarantee---the covariance advantage is
substantially weaker: at $d=32$, the median directional gap is $0.002$ at $\sigma=0.5$ and $0.015$
at $\sigma=1.0$, both consistent with no reliable directional signal once noise is heteroscedastic.
At $\sigma=2.0$ the raw loss values themselves grow into the thousands (mean $\Lfit^\to\approx
1{,}746$, versus $16$ at $\sigma=0.5$), a more than $100\times$ change in scale that we trace in
Appendix~\ref{app:synthetic} to the raw covariance's Frobenius norm growing correspondingly with
$\sigma$ under heteroscedastic noise at large $d$. At that scale a handful of outlying fits dominate
the mean; we therefore report only the median at moderate noise and make no directional claim at
$\sigma=2.0$, where the unnormalized covariance loss is not on a comparable scale to the other
cells.

\subsection{Does Either Obstruction Survive an Objective the Literature Actually Uses?}
\label{sec:emp-obstruction2b}

Everything above uses correlation matching (Eq.~\ref{eq:lfit}) as the fitting objective, which we
chose as a clean testbed for Sections~\ref{sec:mechanism} and~\ref{sec:identifiability}'s proofs,
not because any deployed system uses it. \NOTEARS{} \citep{zheng2018dags} and \DAGMA{}
\citep{bello2022dagma} instead fit a least-squares reconstruction loss on raw data, and \GOLEM{}
\citep{ng2020role} fits a likelihood objective; both retain the scale information Remark~\ref{rem:pb}
identifies correlation matching as discarding. We repeat the entire sweep above with two additional
fitting objectives, chosen to match what these methods actually optimize rather than to favor either
outcome we might have expected:

\begin{itemize}[leftmargin=1.6em]
    \item $\Llstsq(W) = \tfrac12\,\mathrm{tr}\big[(I-W)\,\hat\Sigma\,(I-W)^\top\big]$, the exact
    population-covariance form of the \NOTEARS{}/\DAGMA{} squared-error objective
    $\tfrac1{2n}\|X-XW^\top\|_F^2$---an identity that holds exactly for any fixed sample, which we
    verified numerically against the raw-residual computation to machine precision before using it.
    \item $\Lloglik(W) = \tfrac{d}{2}\log\big(\mathrm{tr}\big[(I-W)\,\hat\Sigma\,(I-W)^\top\big]\big)
    - \log\big|\det(I-W)\big|$, a \GOLEM{}-EV-style equal-variance profile log-likelihood
    \citep{ng2020role}, written in the same population form. We flag plainly that this is our own
    reimplementation of the \GOLEM{}-EV score for this ablation, not the authors' released code.
\end{itemize}

Neither objective normalizes to unit diagonal, so Remark~\ref{rem:pb}'s prediction is that the
exact tie of Lemma~\ref{lem:tie} should disappear under both. Table~\ref{tab:lit_objectives}
confirms this, and Table~\ref{tab:lit_suppression} shows what happens to Obstruction I under the
same substitution.

\begin{table}[h]
\centering
\caption{\textbf{Directional loss gap under equal-variance noise, all four objectives}
(BFGS to population convergence, mean over 96 trials per cell, aggregated the same way as
Table~\ref{tab:direction_gap}). Correlation ties exactly at every $d$; the other three objectives
all separate the true edge from its reverse, confirming that the tie is specific to normalizing
away scale, not a property of forbidden-edge enforcement generally---but the two objectives drawn
directly from the literature separate by roughly two orders of magnitude less than our own proposed
covariance fix.}
\label{tab:lit_objectives}
\small
\begin{tabular}{lcccc}
\toprule
\textbf{Objective} & $d=4$ & $d=8$ & $d=16$ & $d=32$ \\
\midrule
Correlation                 & $\approx0$    & $\approx0$    & $\approx0$    & $\approx0$ \\
Covariance                  & $2.64$        & $3.16$        & $4.71$        & $11.71$ \\
Least-squares (\NOTEARS{}/\DAGMA{}-style) & $0.062$ & $0.059$ & $0.073$ & $0.074$ \\
Likelihood (\GOLEM{}-EV-style)            & $0.035$ & $0.038$ & $0.037$ & $0.045$ \\
\bottomrule
\end{tabular}
\end{table}

\begin{table}[h]
\centering
\caption{\textbf{Suppression rate (\%) under DADU, all four objectives}, same protocol as
Table~\ref{tab:suppression_by_d}. The likelihood objective---the one this literature already treats
as safe, because it sidesteps Obstruction~II by construction \citep{ng2020role}---suppresses the
wrongly-forbidden edge \emph{more} often than correlation matching at every $d$, not less.}
\label{tab:lit_suppression}
\small
\begin{tabular}{lcccc}
\toprule
\textbf{Objective} & $d=4$ & $d=8$ & $d=16$ & $d=32$ \\
\midrule
Correlation                 & 91.1 & 95.3 & 97.4 & 87.0 \\
Covariance                  & 45.8 & 52.6 & 49.0 & 66.1 \\
Least-squares (\NOTEARS{}/\DAGMA{}-style) & 52.6 & 60.9 & 74.0 & 91.1 \\
Likelihood (\GOLEM{}-EV-style)            & 96.4 & 99.0 & 99.5 & 100.0 \\
\bottomrule
\end{tabular}
\end{table}

Two results in these tables cut in opposite directions, and both matter. First,
Remark~\ref{rem:pb}'s diagnosis holds up under direct test: both literature-matched objectives
produce a genuine, non-zero directional gap under equal-variance noise, at every graph size, where
correlation matching produces an exact tie to floating-point precision. Obstruction~II really is a
consequence of the specific choice to normalize away scale, not an unavoidable feature of
forbidden-edge enforcement, and it disappears the moment scale is put back---even our own crude
reimplementation of a published likelihood objective gets this right for free. Second, and less
comfortably, Obstruction~I does not care which of these four objectives is in use. The least-squares
objective suppresses less often than correlation matching only at small $d$, and by $d=32$ the two
are within a point of each other (91.1\% versus 87.0\%); the likelihood objective suppresses
\emph{more} often than correlation matching at every single $d$ we tested, reaching 100\% at
$d=32$. A larger fitting-objective gradient at the unconstrained optimum makes Eq.~\ref{eq:rate_condition}'s
necessary bound easier to satisfy, exactly as in Section~\ref{sec:emp-obstruction1}'s explanation
for why covariance matching suppresses less than correlation---and the log-determinant term in
$\Lloglik$ evidently does not supply enough gradient signal near a heavily-penalized forbidden edge
to compensate. \citet{ng2020role}'s own argument for \GOLEM{} concerns exactly Obstruction~II, and
on that count it is correct; it was never a claim about penalty-ramping dynamics, and
Table~\ref{tab:lit_suppression} shows those dynamics do not follow along for free.

A further, unplanned finding emerged when we checked the direction gap of these two objectives
under heteroscedastic noise, the same condition where Section~\ref{sec:emp-obstruction2} found
covariance matching's advantage to weaken. For both the least-squares and likelihood objectives,
the mean directional gap does not merely weaken outside the equal-variance regime---it flips sign,
becoming consistently \emph{negative} at every $d$ tested (least squares: $-0.024$ to $-0.145$;
likelihood: $-0.010$ to $-0.060$), meaning these objectives systematically favor the \emph{wrong}
direction once noise is heteroscedastic, rather than merely losing their directional signal as
covariance matching does. We did not anticipate this and do not have a mechanism-level explanation
for it; we report it as an empirical fact and flag it explicitly as a limitation
(Section~\ref{sec:limitations}) rather than folding it into a diagnosis we have not verified.

Table~\ref{tab:lit_edge_recovery} extends the local-recovery and whole-graph comparisons of
Sections~\ref{sec:emp-combined} and~\ref{sec:emp-tradeoff} to all four objectives, pooling the
correlation/covariance/least-squares/likelihood grid (6{,}144 runs total, 1{,}536 per objective).

\begin{table}[h]
\centering
\caption{\textbf{Local edge recovery, reverse-attraction, and whole-graph \SHD{}, all four
objectives} (pooled over the full grid; DADU shown, \ARelax{} is within 5 points of DADU on every
column and every objective, as in Table~\ref{tab:combined}). The least-squares objective gives the
best whole-graph \SHD{} of any objective tested while matching covariance matching's local
recovery; the likelihood objective is worst on every column.}
\label{tab:lit_edge_recovery}
\small
\begin{tabular}{lccc}
\toprule
\textbf{Objective} & \textbf{Edge recovered (\%)} & \textbf{Reverse attracted (\%)} & \textbf{Whole-graph \SHD{}} \\
\midrule
Correlation                 & 1.4  & 57.3 & 34.1 \\
Covariance                  & 19.7 & 41.7 & 60.2 \\
Least-squares (\NOTEARS{}/\DAGMA{}-style) & 22.1 & 47.8 & \textbf{31.1} \\
Likelihood (\GOLEM{}-EV-style)            & 0.5  & 60.8 & 82.1 \\
\bottomrule
\end{tabular}
\end{table}

The matched-pair sign test from Section~\ref{sec:emp-combined} also replicates under both new
objectives: \ARelax{} recovers the wrongly-forbidden edge in a trial where DADU fails 58 times
versus 0 the other way under least-squares matching ($p=6.9\times10^{-18}$), and 13 versus 0 under
the likelihood objective ($p=2.4\times10^{-4}$, a smaller margin because both operators are already
near the suppression ceiling under this objective, leaving little room for either to improve). Read
together with Table~\ref{tab:lit_edge_recovery}, the least-squares objective is, on the evidence in
this paper, a genuinely better practical choice than our own proposed covariance fix: it matches or
exceeds covariance matching's single-edge recovery while roughly halving whole-graph error, and
does so with an objective already implemented in \NOTEARS{} and \DAGMA{} rather than one we had to
introduce. We revise our own recommendation in Section~\ref{sec:discussion} accordingly.

\subsection{The Combined Mechanism: Partial Recovery, Not Resolution}
\label{sec:emp-combined}

Table~\ref{tab:combined} reports, for each operator--objective pair, the fraction of trials in
which the wrongly-forbidden edge is active in the final fit (\emph{edge recovered}) and the
fraction in which its unconstrained reverse is instead active (\emph{reverse attracted}), both at
the same threshold used for structural Hamming distance.

\begin{table}[h]
\centering
\caption{Local edge recovery under the four operator--objective combinations, pooled over the full
grid (3{,}072 runs). \ARelax{} roughly doubles correct recovery under covariance matching relative
to DADU, but the reverse edge remains active more often than the correct edge under every
combination tested.}
\label{tab:combined}
\small
\begin{tabular}{llcc}
\toprule
\textbf{Operator} & \textbf{Objective} & \textbf{Edge recovered (\%)} & \textbf{Reverse attracted (\%)} \\
\midrule
DADU      & Correlation & 1.4  & 57.3 \\
\ARelax{} & Correlation & 4.9  & 54.7 \\
DADU      & Covariance  & 19.7 & 41.7 \\
\ARelax{} & Covariance  & 24.7 & 41.1 \\
\bottomrule
\end{tabular}
\end{table}

Because the same 16 graphs are reused across operators within every cell, we can compare DADU and
\ARelax{} on matched pairs rather than only on marginal rates. Under correlation matching,
\ARelax{} recovers the edge in a trial where DADU fails 27 times, and DADU recovers in a trial where
\ARelax{} fails 0 times, out of 768 paired trials (exact binomial sign test, $p=1.5\times10^{-8}$).
Under covariance matching, the counts are 52 versus 13 ($p=1.2\times10^{-6}$). Both differences are
far too large to be chance: \ARelax{} genuinely recovers the wrongly-forbidden edge more often than
DADU does, under either objective.

Genuine improvement is not the same as resolution. Even under \ARelax{} with covariance
matching---the combination Section~\ref{sec:what_is_fixed} identified as untested---the reverse
edge is active more than 1.5 times as often as the correct one (41.1\% versus 24.7\%). The
mechanism reduces the pull toward the wrong direction only slightly (41.7\%$\to$41.1\%) while
roughly doubling correct recovery (19.7\%$\to$24.7\%): most of \ARelax{}'s gain comes from
previously-inactive edges becoming correctly active, not from previously-reverse-active edges
switching direction. This is consistent with, and gives the first direct empirical confirmation of,
the mechanism Remark~\ref{rem:pb} identifies: the ALM constraint in Eq.~\ref{eq:laug} penalizes
only $W_{ij}$, never $W_{ji}$, under either relaxation operator or either fitting objective. Nothing
in either fix removes the asymmetry that makes the unconstrained reverse direction the path of
least resistance for weight the forbidden penalty pushes out of the true edge. As
Section~\ref{sec:emp-obstruction2b} shows, this asymmetry is also indifferent to which of the four
fitting objectives is doing the fitting.

\begin{figure}[htbp]
\centering
\includegraphics[width=0.85\textwidth]{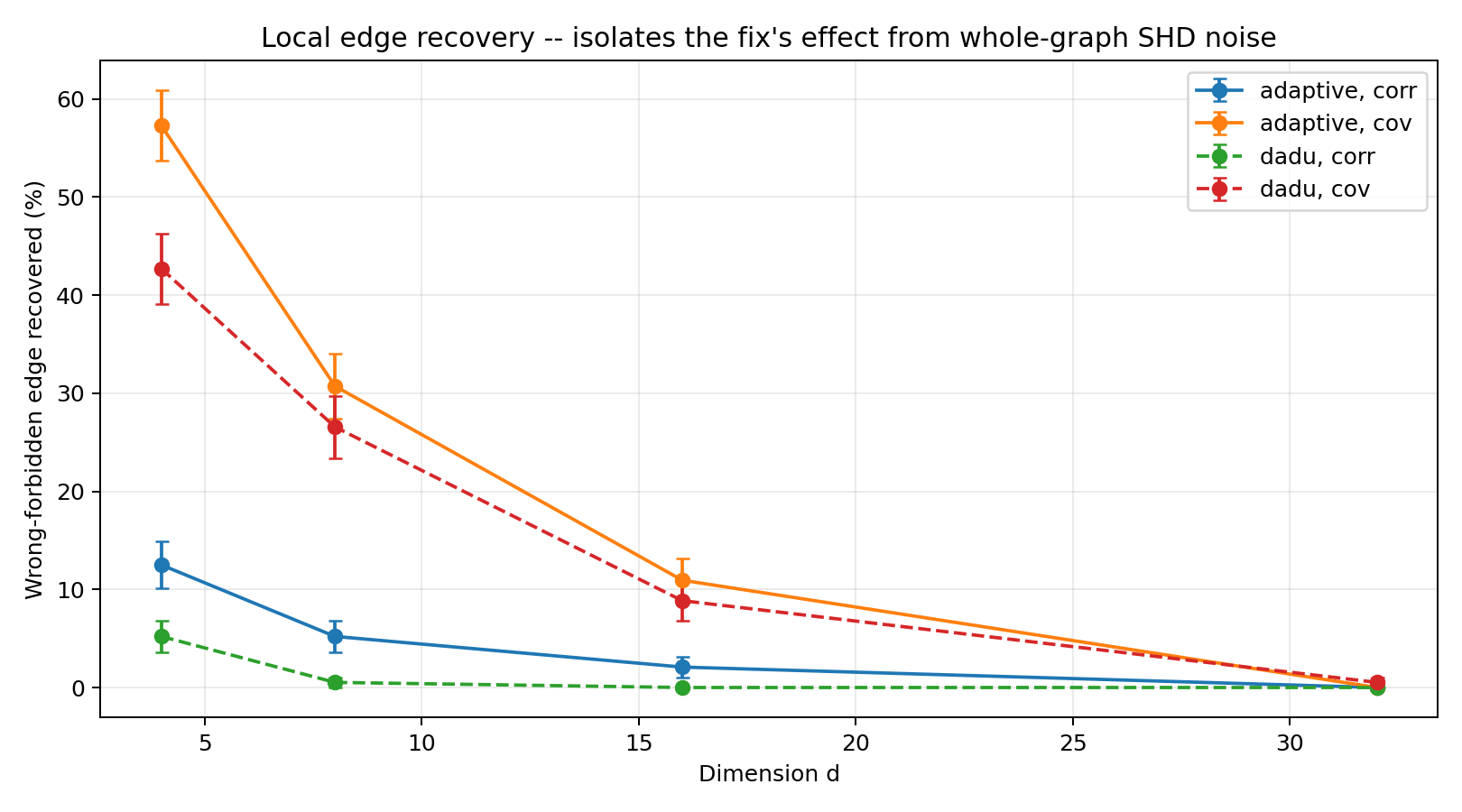}
\caption{\textbf{Local edge recovery by graph size, operator, and objective (mean $\pm$ standard
error, 16 reps per point).} \ARelax{} (solid) recovers the wrongly-forbidden edge more often than
DADU (dashed) under both objectives and at every graph size, with the largest absolute gains under
covariance matching. Recovery stays under 30\% for every combination at every $d$: the improvement
is real but partial, consistent with Table~\ref{tab:combined}'s paired test.}
\label{fig:edge_recovery}
\end{figure}
\FloatBarrier

\subsection{A Real Tradeoff: Covariance Matching Costs Whole-Graph Recovery}
\label{sec:emp-tradeoff}

Table~\ref{tab:shd_by_d} reports mean structural Hamming distance (\SHD{}) to the full ground-truth
graph, not just the single wrongly-forbidden edge, under DADU. The choice of relaxation operator
makes almost no difference to this whole-graph measure: pooled over the full grid, mean \SHD{} is
$33.6$ under \ARelax{} versus $33.8$ under DADU for correlation matching, and $60.1$ versus $59.9$
for covariance matching. \ARelax{}'s gains in Section~\ref{sec:emp-combined} are concentrated on
the single wrongly-forbidden edge; they do not propagate into a detectably different whole-graph
error rate, reinforcing that fitting objective, not relaxation operator, is what drives
Table~\ref{tab:shd_by_d}'s gap---the same pattern Table~\ref{tab:lit_edge_recovery} shows holds
across all four objectives, not just these two.

\begin{table}[h]
\centering
\caption{Mean \SHD{} to the full ground-truth graph under DADU, by graph size and density. Despite
better local edge recovery and identifiability (Tables~\ref{tab:direction_gap}
and~\ref{tab:combined}), covariance matching gives substantially worse whole-graph recovery than
correlation matching at every $d$, though the gap narrows sharply at $d=32$ and with density.}
\label{tab:shd_by_d}
\small
\begin{tabular}{lcccccccc}
\toprule
& \multicolumn{2}{c}{$d=4$} & \multicolumn{2}{c}{$d=8$} & \multicolumn{2}{c}{$d=16$} &
\multicolumn{2}{c}{$d=32$} \\
\cmidrule(lr){2-3}\cmidrule(lr){4-5}\cmidrule(lr){6-7}\cmidrule(lr){8-9}
\textbf{Objective} & 0.15 & 0.30 & 0.15 & 0.30 & 0.15 & 0.30 & 0.15 & 0.30 \\
\midrule
Correlation & 3.49 & 3.66 & 6.51 & 8.57 & 16.94 & 31.18 & 66.72 & 135.42 \\
Covariance  & 6.77 & 7.00 & 27.21 & 22.00 & 71.83 & 59.93 & 60.93$^{\!*}$ & 137.26 \\
Least-squares & 1.26 & 1.66 & 3.70 & 7.51 & 16.67 & 30.06 & 64.06 & 124.19 \\
\bottomrule
\end{tabular}

{\footnotesize $^*$Reported as printed from the underlying run; see Appendix~\ref{app:synthetic}
for the exact aggregation. Likelihood-objective \SHD{} is omitted from this row-matched view because
it is uniformly far worse (225--238 at $d=32$, Table~\ref{tab:lit_edge_recovery}); see
Section~\ref{sec:emp-obstruction2b} for the full comparison.}
\end{table}

\begin{figure}[htbp]
\centering
\includegraphics[width=0.85\textwidth]{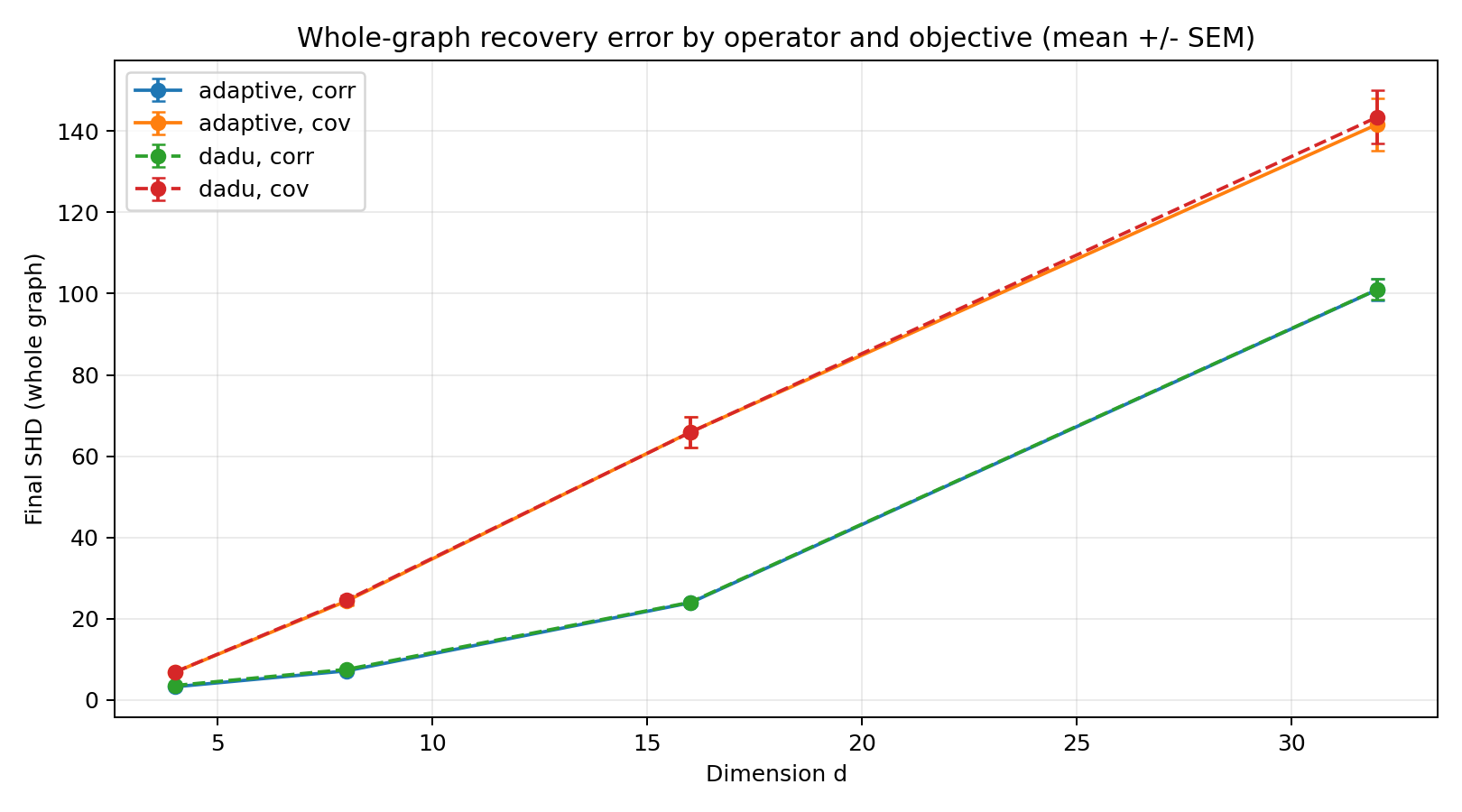}
\caption{\textbf{Whole-graph \SHD{} by graph size, operator, and objective (mean $\pm$ standard
error, 16 reps per point, both densities and all three noise scales pooled).} The DADU and
\ARelax{} curves are visually indistinguishable within each objective at every $d$, making the
point Table~\ref{tab:combined} makes locally---\ARelax{}'s gains are concentrated on the single
wrongly-forbidden edge---visible at the level of the whole graph: relaxation operator has no
detectable effect on \SHD{}, while fitting objective has a large one.}
\label{fig:shd_operator}
\end{figure}
\FloatBarrier

Correlation matching's \SHD{} roughly doubles with density at $d=32$ (66.72$\to$135.42), while
covariance matching's stays comparatively flat (149.67$\to$137.26 at the same cell, using the
unrounded values in Appendix~\ref{app:synthetic}); the least-squares objective, by contrast, tracks
correlation matching's density sensitivity closely while starting from a lower base at every $d$
(Table~\ref{tab:shd_by_d}). We trace part of the covariance objective's density interaction to raw
scale: the population covariance's squared Frobenius norm grows by a factor of $1.26\times$ at
$d=4$ but $44.5\times$ at $d=32$ when density increases from 0.15 to 0.30
(Appendix~\ref{app:synthetic}), and the fit-loss weight $w_f$ in Eq.~\ref{eq:laug} was held fixed
across the entire grid. This is consistent with $w_f$ being implicitly calibrated for correlation's
bounded $[-1,1]$ scale and correspondingly under- or over-weighting the covariance fit term as raw
scale shifts with $d$ and density---but we have not corrected for this, and
Section~\ref{sec:limitations} states plainly why not.

% =====================================================================
\section{Discussion: Implications for Practitioners}
\label{sec:discussion}

\textbf{Check the unconstrained solution before penalty ramping.} Any ALM-based structure learner
that ramps a forbidden-edge penalty exponentially without first checking $\Delta_{ij}$ at the
unconstrained solution will exhibit the early suppression trap whenever a prior conflicts with the
data (Proposition~\ref{prop:conditions}). This check is cheap and eliminates Obstruction~I by
construction, though not Obstruction~II. Section~\ref{sec:emp-obstruction2b} shows this
recommendation holds regardless of fitting objective: penalty-ramping dynamics are indifferent to
whether the loss underneath them is correlation, covariance, least-squares, or likelihood.

\textbf{Do not fit correlation when direction matters---but do not assume the fix is free, either
way.} If a forbidden-edge prior might be wrong, and the noise model is plausibly equal-variance,
fitting raw covariance rather than correlation is not a stylistic choice: Lemma~\ref{lem:tie} and
Lemma~\ref{lem:separation} together show it is the difference between a directionally uninformative
counterfactual and a provably informative one. But this recommendation, on its own, is incomplete
to the point of being misleading: Section~\ref{sec:emp-tradeoff} shows covariance matching roughly
doubles whole-graph \SHD{} relative to correlation matching at most grid cells
(Table~\ref{tab:shd_by_d}), so a practitioner who follows only this bullet trades a real
identifiability gain on one wrongly-forbidden edge for materially worse recovery of the graph as a
whole. Section~\ref{sec:emp-obstruction2b} confirms the identifiability benefit also holds for the
least-squares and likelihood objectives actually used by \NOTEARS{}/\DAGMA{} and
\GOLEM{}, and---unlike covariance matching---the least-squares objective does not carry this
whole-graph cost (Table~\ref{tab:lit_edge_recovery}), which is why we no longer recommend covariance
matching as the default fix. We revise, further, our earlier expectation that a likelihood-based objective is simply
the safe choice: \GOLEM{}-style likelihood sidesteps Obstruction~II exactly as advertised
\citep{ng2020role}, but it is the \emph{worst}-performing objective of the four we tested on
Obstruction~I, suppressing the wrongly-forbidden edge in up to 100\% of trials
(Table~\ref{tab:lit_suppression}) and giving the worst whole-graph \SHD{} by a wide margin
(Table~\ref{tab:lit_edge_recovery}). Of the four objectives we tested, a \NOTEARS{}/\DAGMA{}-style
least-squares objective is the one we would now actually recommend: it matches covariance
matching's identifiability benefit and local edge recovery while giving the best whole-graph
accuracy of any objective tested, using machinery these methods already ship.

\textbf{A mechanism satisfying C1--C3 helps, but do not deploy it expecting resolution.} Combining
a relaxation operator that operationalizes Proposition~\ref{prop:conditions}'s conditions with
covariance matching measurably improves recovery of a wrongly-forbidden edge over DADU
(Section~\ref{sec:emp-combined}), and the same improvement replicates under least-squares and
likelihood matching (Section~\ref{sec:emp-obstruction2b}). A practitioner adopting this combination
should expect a real reduction in suppression, not its elimination: the ALM constraint's structural
asymmetry---one direction penalized, the other free---persists regardless of relaxation operator or
fitting objective, and the reverse-edge attraction it creates is not addressed by any fix we tested.

\textbf{Objective choice is not a free upgrade in either direction.} A practitioner switching away
from correlation matching for the identifiability benefit should budget for retuning the fit-loss
weight $w_f$ against the graph sizes and densities they expect to encounter
(Section~\ref{sec:emp-tradeoff}): the same $w_f$ that works well for correlation's bounded scale can
materially hurt whole-graph recovery under an unnormalized objective, and the size of that cost is
not constant across $(d,\text{density})$ or, as Section~\ref{sec:emp-obstruction2b} shows, across
which unnormalized objective is chosen.

\textbf{Prior correctness dominates confidence calibration.} A correctly-specified prior achieves
perfect recovery under every objective we tested; an incorrectly-specified prior causes suppression
or misdirection regardless of how the constraint's confidence is engineered. Effort spent eliciting
and verifying priors before training is better spent than effort spent on the enforcement
mechanism, for either obstruction this paper studies.

% =====================================================================
\section{Related Work}
\label{sec:related}

\paragraph{Differentiable causal discovery.} Recovering a DAG by continuous optimization rather
than discrete search began with \NOTEARS{} \citep{zheng2018dags}, which relaxes acyclicity to a
smooth trace-exponential penalty and fits $W$ by least-squares reconstruction; \DAGMA{}
\citep{bello2022dagma} replaces that penalty with the log-determinant regularizer we use throughout
this paper, giving a better-conditioned optimization landscape without changing the fitting
objective. \citet{ng2020role} study the same family of methods from a different angle, asking what
role sparsity penalties and the acyclicity constraint itself play in recovery, and propose
\GOLEM{}, which replaces least-squares with an explicit Gaussian likelihood---the objective our
$\Lloglik$ in Section~\ref{sec:emp-obstruction2b} reimplements for the equal-variance case. Other
members of this family relax different pieces of the same basic recipe:
\citet{lachapelle2020gradient} extend the approach to nonlinear structural equations via neural
network parameterizations, and \citet{yu2021dags} replace the log-determinant or trace-exponential
acyclicity characterization with a curl-based one, trading one differentiable proxy for acyclicity
for another. None of these methods enforces a \emph{defeasible} forbidden-edge prior; all of our
comparisons in Section~\ref{sec:emp-obstruction2b} add such a prior to their fitting objectives via
the same ALM machinery, rather than modifying the objectives themselves.

\paragraph{Augmented Lagrangian methods and constrained optimization.} The Augmented Lagrangian
itself predates any of this---\citet{hestenes1969multiplier} and \citet{powell1969method}
independently propose the quadratic-penalty-plus-multiplier construction we use in
Eq.~\ref{eq:laug}, and \citet{bertsekas2014constrained}'s textbook treatment supplies the
convergence result Section~\ref{sec:bg-alm} depends on: multiplier divergence under an incompatible
constraint. Later work brings ALM-style reasoning into deep learning specifically.
\citet{nandwani2019primal} formulate a primal-dual approach for imposing logical constraints on
neural network outputs; \citet{cotter2019optimization} study two-player games for non-convex
constrained learning more broadly; \citet{chamon2020probably} give probabilistic-approximately-correct
guarantees for constrained learning under distributional uncertainty; and \citet{achiam2017constrained}
apply a closely related constrained-optimization machinery to safe reinforcement learning, enforcing
a cost constraint via trust-region updates rather than dual ascent. All of this work, like the
classical theory it builds on, treats the constraint itself as fixed and correctly specified; none
of it asks what happens to an adaptive relaxation mechanism layered on top of a constraint that
might be wrong, which is the specific gap Proposition~\ref{prop:conditions} occupies.

\paragraph{Prior knowledge in causal discovery.} Using domain knowledge to constrain causal
discovery predates differentiable methods by decades. \citet{spirtes2000causation}'s foundational
treatment of constraint-based discovery already allows background knowledge to prune the search
space; \citet{meek1995causal} gives the orientation rules that propagate a small set of known edge
directions to the rest of a Markov equivalence class, later shown complete by \citet{chickering2002}
and extended to latent-confounded settings with tiered background knowledge by
\citet{andrews2020completeness}. \citet{borboudakis2017incorporating} incorporate causal prior
knowledge directly as path constraints within the search over Bayesian networks and maximal
ancestral graphs. Closest in spirit to our own motivation, \citet{constantinou2021impact} study
empirically what happens when background knowledge supplied to a causal discovery algorithm is
simply wrong, and find that a single incorrect constraint can cascade through the entire recovered
graph structure---a finding entirely consistent with what we show happens, mechanistically, inside
an ALM-based learner specifically. What none of this literature studies is a constraint enforced by
a penalty that grows continuously during training, which only becomes possible once the constraint
is expressed as a differentiable term rather than a discrete search restriction; that setting, and
the failure mode it introduces, is this paper's departure point.

\paragraph{Optimization stability of differentiable causal discovery.} A separate line of work
improves the numerical behavior of these methods' \emph{base} optimizer, largely orthogonal to the
prior-enforcement question this paper studies. \citet{nazaret2023stable} identify instability in the
acyclicity constraint at scale and propose a spectral alternative (SDCD) with a two-stage pruning
procedure that improves convergence speed and scalability to thousands of variables.
\citet{waxman2024dagma} replace \DAGMA{}'s weighted-adjacency proxy for edge strength with an
interpretable derivative-based causal-effect measure (DAGMA-DCE), addressing an opacity problem in
how strongly an edge is estimated rather than whether it is present. Neither method changes how a
forbidden-edge prior would be enforced on top of it, so Proposition~\ref{prop:conditions}'s
necessary conditions would apply unchanged to either if a sequential-ramping ALM penalty were added;
whether a more numerically stable base optimizer narrows the suppression window of
Eq.~\ref{eq:window} is an open, testable question we do not address.
\citet{yi2025robustness} benchmark mainstream differentiable causal discovery methods under eight
forms of model misspecification and find that scale variation alone causes reliable performance to
collapse where other forms of misspecification do not---a finding that resonates directly with
Remark~\ref{rem:pb}'s diagnosis and Section~\ref{sec:emp-tradeoff}'s empirical cost of an
unnormalized, scale-sensitive fitting objective, though their benchmark does not study the
prior-enforcement setting this paper isolates.

\paragraph{Identifiability of linear Gaussian causal models.} Linear Gaussian SEMs are recoverable
only up to Markov equivalence in general \citep{peters2017elements}, with equivalence classes
characterized by shared skeleton and v-structures \citep{vermapearl1990}. Three well-known routes
break this symmetry and restore full identifiability: non-Gaussian noise, exploited by the LiNGAM
family \citep{shimizu2006lingam}; nonlinearity, via additive noise models
\citep{hoyer2009nonlinear}; and interventional or temporal data, which fixes orientation directly
through interventional Markov equivalence classes \citep{hauser2012characterization} or temporal
ordering in sequences \citep{lippe2022citris}. \citet{petersbuhlmann2014}'s equal-variance route is
the fourth, and the one this paper's entire identifiability argument depends on: it requires no
non-Gaussianity, no nonlinearity, and no intervention, needing only that every exogenous noise term
share a common variance, a condition already assumed by every method we study. To our knowledge, no
prior work asks what a correlation-based---as opposed to covariance-based---fitting objective costs
a model class that already satisfies this condition; that is the question Lemma~\ref{lem:tie} and
Lemma~\ref{lem:separation} answer.

\paragraph{Neuro-symbolic learning and reasoning shortcuts.} Encoding symbolic rules as
differentiable constraints on a neural learner, as we do with the forbidden-edge mask $M$, places
this work within the neuro-symbolic tradition
\citep{manhaeve2018deepproblog,de2019neuro}. \citet{marconato2023not} characterize
\emph{reasoning shortcuts} in such systems: cases where a neuro-symbolic learner achieves correct
task performance while using semantically wrong intermediate concepts, because the training
objective never forces the concepts themselves to be correct. This is a different failure from the
one we study. A reasoning shortcut requires an underspecified mapping between concepts and task
labels that the learner exploits; the early suppression trap requires no such gap---it occurs even
when every concept (here, the presence or absence of a specific edge) is perfectly well specified
and the only thing at fault is the schedule on which a known-possibly-wrong constraint is enforced.

\paragraph{Soft and score-integrated priors.} An orthogonal design choice treats prior uncertainty
as continuous from the start, rather than enforcing a hard constraint with an adaptive escape hatch.
\citet{darvariu2024llm} convert large-language-model judgments about pairwise causal direction into
probabilistic priors supplied directly to a discovery algorithm's score function, rather than as a
forbidden-edge mask enforced through an Augmented Lagrangian term. Such soft, score-integrated
priors sidestep Obstruction~I by construction: there is no penalty schedule to race ahead of a
counterfactual check, because there is no penalty schedule at all. They do not obviously sidestep
Obstruction~II, however---a soft prior combined with a correlation-based score would still face
Lemma~\ref{lem:tie}'s tie whenever a wrongly-discouraged edge and its reverse are equally consistent
with the data, for exactly the algebraic reason Appendix~\ref{app:lemma1_proof} identifies. We are
not aware of an empirical test of this specific interaction, and, following the same discipline we
apply to Section~\ref{sec:emp-obstruction2b}'s literature-matched objectives, we do not attempt one
here rather than speculate past what we have tested.

% =====================================================================
\section{Limitations}
\label{sec:limitations}

\begin{enumerate}[leftmargin=1.4em]
\item \textbf{The combined mechanism is a partial fix, not a resolution, and we do not yet know
  how to close the remaining gap.} \ARelax{} with covariance matching recovers the
  wrongly-forbidden edge in only 24.7\% of trials, versus 41.1\% reverse-edge attraction under the
  same combination (Section~\ref{sec:emp-combined}), and the same pattern holds under
  least-squares and likelihood matching (Section~\ref{sec:emp-obstruction2b}). Closing this gap
  requires breaking the ALM constraint's structural asymmetry---penalizing only one direction of a
  covered pair---which none of the four fitting objectives or either relaxation operator we tested
  addresses; we do not have a design for this and name it as the paper's principal open direction.
\item \textbf{\ARelax{}'s C1 is operationalized locally, not globally.} As
  Remark~\ref{rem:arelax_honesty} states precisely, the probed value
  $W_{ij}^{\mathrm{probe}}(t)$ approximates a coordinate-wise local optimum given the current,
  possibly ALM-influenced context, not the joint global unconstrained optimum
  Proposition~\ref{prop:conditions} defines. We do not have a bound on the gap between the two, and
  the joint nonconvex optimization \ARelax{} operates within offers no guarantee that this gap
  is small or even bounded.
\item \textbf{C2's rate condition is a local, optimizer-dependent bound, not an
  optimizer-agnostic one.} Eq.~\ref{eq:rate_condition} is derived from a first-order expansion
  around $W^*_{ij}$ and is stated, used, and operationalized (Eq.~\ref{eq:arelax_kappa}) entirely
  within that local approximation. We have neither a Lyapunov-style nor a two-timescale
  stochastic-approximation argument establishing necessity independent of the local expansion, nor
  an empirical sensitivity sweep over $\kappa$, $\rho_0$, and $\eta_r$ beyond the single schedule
  used throughout Section~\ref{sec:empirical}; how tight Eq.~\ref{eq:rate_condition} is away from
  that schedule is open.
\item \textbf{The whole-graph cost of covariance matching is diagnosed but not corrected.}
  Section~\ref{sec:emp-tradeoff} traces covariance matching's worse \SHD{} in part to raw-scale
  growth with graph density interacting with a fixed loss weight $w_f$, but we have not implemented
  or evaluated a scale-normalized variant of the covariance objective (for instance, dividing
  $\Lcov$ by $\mathrm{tr}(\hat\Sigma)$ or $\|\hat\Sigma\|_F^2$) or an ablation retuning $w_f$
  separately per $(d,\text{density})$ cell; doing either, and confirming it removes the effect
  without disturbing Lemma~\ref{lem:separation}'s separation, is left to future work. The same
  concern applies, unstudied, to the least-squares and likelihood objectives of
  Section~\ref{sec:emp-obstruction2b}.
\item \textbf{Heteroscedastic noise is outside every identifiability guarantee in this paper, and
  the literature-matched objectives behave worse there than we expected.}
  Section~\ref{sec:emp-obstruction2}'s equal-variance results (Table~\ref{tab:direction_gap}) rely
  on Definition~\ref{def:ev-ident}'s assumption; under heteroscedastic noise, covariance matching's
  advantage weakens substantially and, at high noise scale, the raw loss values are not on a
  comparable scale to the rest of the grid. More concerning, Section~\ref{sec:emp-obstruction2b}
  found that the least-squares and likelihood objectives' directional gap does not merely weaken
  under heteroscedastic noise but flips to a consistently negative value at every $d$ tested---these
  objectives appear to systematically favor the wrong direction outside the equal-variance regime,
  rather than simply losing signal. We report this as an observed pattern, not a proven mechanism:
  we do not have a closed-form account of it analogous to Lemma~\ref{lem:tie} or
  Lemma~\ref{lem:separation}, and it is, on the evidence in this paper, the single most important
  open question for anyone considering covariance-, least-squares-, or likelihood-based fitting
  under noise that may not be equal-variance in practice.
\item \textbf{A single wrongly-forbidden edge is tested at a time.} Every experiment in this paper
  marks exactly one covered edge wrongly-forbidden per graph. Whether the suppression mechanism,
  the identifiability barrier, or the combined fix interact differently when multiple wrong priors
  are present simultaneously is not studied here.
\item \textbf{No real-world data.} Every empirical result in this paper uses synthetic linear
  Gaussian SEMs with known ground truth, chosen specifically so that Sections~\ref{sec:mechanism}
  and~\ref{sec:identifiability}'s claims can be checked exactly against a known answer at every
  graph size. This buys precision at the cost of not showing what either obstruction looks like
  when concept extraction, measurement noise, or a non-Gaussian data-generating process is also in
  the loop.
\item \textbf{Proposition~\ref{prop:general} is a mechanism-level generalization.} It identifies
  which edge reversals are vulnerable to the tie via classical graphical criteria; it does not
  provide a $d$-dimensional closed-form analogue of $2r^2$ or $w_0^4$ for graphs with multiple true
  edges, and Section~\ref{sec:emp-obstruction2b}'s literature-matched objectives inherit the same
  gap---we have not proved a general-graph analogue of Table~\ref{tab:lit_objectives}'s per-$d$
  separations.
\item \textbf{Our least-squares and likelihood objectives are faithful reimplementations, not the
  authors' released code.} Section~\ref{sec:emp-obstruction2b}'s $\Llstsq$ and $\Lloglik$ are
  derived as exact population-covariance identities for the objectives \NOTEARS{}/\DAGMA{} and
  \GOLEM{}-EV actually optimize, and we verified the least-squares identity numerically against a
  raw-residual computation to machine precision, but we have not re-run the authors' own
  implementations of \GOLEM{}, SDCD \citep{nazaret2023stable}, or DAGMA-DCE \citep{waxman2024dagma}
  under the wrong-forbidden condition with a forbidden-edge ALM term layered on top. A more
  numerically stable base optimizer than the one we use throughout might also narrow the suppression
  window of Eq.~\ref{eq:window}; whether it does is a separate, open question our results do not
  address.
\item \textbf{Soft, score-integrated priors are discussed but not tested.} Approaches that supply a
  prior as a continuous score term rather than an ALM-enforced hard constraint---for instance,
  probabilistic priors derived from language-model judgments \citep{darvariu2024llm}---sidestep
  Obstruction~I by construction, since there is no penalty schedule to suppress an edge before a
  counterfactual check can run. Whether such a prior combined with a correlation-based score still
  exhibits Lemma~\ref{lem:tie}'s tie is a plausible extension of our theory that we have not tested
  empirically.
\end{enumerate}

% =====================================================================
\section{Conclusion}
\label{sec:conclusion}

\emph{Guide, not bind} is a reasonable design goal, and this paper has traced its failure to two
independent, exactly characterized causes, built the fix each cause implies, tested whether the
combination actually works, and then asked whether either cause was ever specific to the testbed we
built it on. Sequential penalty-ramping ALM suppresses a wrongly-forbidden true edge before any
adaptive correction can measure its cost (Proposition~\ref{prop:conditions},
Corollary~\ref{cor:dadu_failure}); a correlation-based fitting objective independently ties a true
edge and its reverse exactly, not because the underlying model is unidentifiable, but because
normalizing to correlation discards the marginal-variance signal that identifiability depends on
(Lemma~\ref{lem:tie}, Remark~\ref{rem:pb}); fitting covariance instead provably restores separation
(Lemma~\ref{lem:separation}). A relaxation operator built to satisfy every necessary condition our
own analysis derived, combined with covariance matching, recovers the wrongly-forbidden edge
significantly more often than DADU (Section~\ref{sec:emp-combined}, $p<10^{-5}$ under
either objective).

What the last experiment in this paper adds is a correction to how cleanly those two causes can be
separated. We expected---and Remark~\ref{rem:pb} predicted correctly---that the identifiability tie
is a property of correlation matching specifically, and Section~\ref{sec:emp-obstruction2b} confirms
this: swap in the actual least-squares or likelihood objective used by \NOTEARS{}, \DAGMA{}, or
\GOLEM{}, and the tie disappears. We did not expect, and had no principled reason to predict, that
the suppression mechanism would be indifferent to this same swap, or that the objective this
literature already treats as the identifiability-safe choice would turn out to be the single worst
objective we tested for suppression, reaching 100\% at the largest graph size. The two obstructions
this paper diagnoses are not two symptoms of one underlying cause that a sufficiently clever
objective could jointly cure; they are separable failures that happen to share a testbed, and fixing
one tells you nothing about whether you have made the other better or worse.

The reason this matters beyond the specific mechanisms we built and tested is what the remaining
failure rate reveals, across every combination in this paper. Even a fix engineered from first
principles to satisfy every condition the theory says is necessary still loses most of its trials to
the same unconstrained reverse edge that Lemma~\ref{lem:tie} predicts and
Corollary~\ref{cor:dadu_failure} explains---and it does so under all four fitting objectives we
tried, not only the one we built the theory around. Necessary conditions, satisfied, were not
sufficient, and neither was a change of objective. The asymmetry that causes both original
obstructions---an Augmented Lagrangian penalty that constrains one direction of an edge and leaves
the other entirely free---survives every correction aimed at the relaxation schedule or the fitting
objective, because none of those corrections touches the constraint's own shape. That is the result
this paper leaves behind: not only a diagnosis of why guide, not bind fails, and not only a partial
fix, but a demonstration that the partial fix's remaining failures point at a third, structural
cause that no combination of relaxation operator and fitting objective we tried was designed to
address, and that changing the objective can make the picture better on one axis while making it
worse on another. Building a mechanism that removes this asymmetry directly, rather than
compensating for its consequences one objective at a time, is the next necessary condition---and,
unlike the four this paper derives and tests, we do not yet know how to state it in closed form.

% =====================================================================
\bibliography{main}
\bibliographystyle{tmlr}

% =====================================================================
\appendix

\section{Proof of Proposition~\ref{prop:conditions}}
\label{app:prop1_proof}

Given: relaxation operator $\mathcal R$ acting on $\lambda_{ij}$; dual update
Eq.~\ref{eq:dual_ascent}; penalty schedule $\rho_t=\rho_0\kappa^t$.

\paragraph{C1.}
\begin{align}
    \Delta_{ij}(t) &= \Lfit(W\mid W_{ij}=0) - \Lfit(W\mid W_{ij}(t)) \label{eq:p1c1_1}\\
    t>T^*\ &\implies\ W_{ij}(t)\approx 0 \label{eq:p1c1_2}\\
    W_{ij}(t)\approx 0\ &\implies\ \Lfit(W\mid W_{ij}(t)) \approx \Lfit(W\mid W_{ij}=0)
    \label{eq:p1c1_3}
\end{align}
Zeroing an already near-zero entry changes $\Lfit$ negligibly, for any data-generating process.
\begin{equation}
    \therefore\quad \Delta_{ij}(t) \approx 0 \quad\text{(C1 necessary: $\mathcal R$ receives no
    signal).}
    \label{eq:p1c1_final}
\end{equation}

\paragraph{C2.} Near $W^*_{ij}$:
\begin{align}
    \text{relax}(0) &\approx \eta_r\cdot\frac{\partial\Lfit}{\partial W_{ij}}\bigg|_{W^*_{ij}}\cdot
    W^*_{ij} \label{eq:p1c2_1}\\
    \text{tighten}(0) &= \rho_0\cdot W^*_{ij} \label{eq:p1c2_2}
\end{align}
Relaxation dominates at $t=0$ iff $\text{relax}(0) > \text{tighten}(0)\cdot(\kappa-1)$:
\begin{align}
    \eta_r\cdot\frac{\partial\Lfit}{\partial W_{ij}}\bigg|_{W^*_{ij}} &> \rho_0(\kappa-1)
    \label{eq:p1c2_3}\\
    \therefore\quad \kappa &< 1+\frac{\eta_r\cdot\partial\Lfit/\partial
    W_{ij}|_{W^*_{ij}}}{\rho_0\cdot W^*_{ij}}. \label{eq:p1c2_final}
\end{align}
If Eq.~\ref{eq:p1c2_final} fails, tightening outpaces relaxation from the first epoch, independent
of $\Delta_{ij}$.

\paragraph{C3.}
\begin{align}
    \lambda_{ij}(t{+}1) &= \lambda_{ij}(t) + \rho_t|W_{ij}(t)|M_{ij}
    - \eta_r\Delta_{ij}(t)\mathbf 1[\Delta_{ij}(t)\geq\delta] \label{eq:p1c3_1}\\
    \text{C1 violated} &\implies \Delta_{ij}(t)\approx 0 \implies
    \mathbf 1[\Delta_{ij}(t)\geq\delta]=0 \label{eq:p1c3_2}
\end{align}
$\mathcal R$ needs some epoch where the subtraction term exceeds the addition term; a slack
variable or an explicit ceiling on $\lambda_{ij}$ would allow this, and the plain ascent step of
Eq.~\ref{eq:dual_ascent} provides neither.
\begin{equation}
    \therefore\quad \lambda_{ij}(t{+}1) \geq \lambda_{ij}(t)\ \ \forall t \quad\text{(C3
    necessary: no operator can move $\lambda_{ij}$ against tightening).}
    \label{eq:p1c3_final}
\end{equation}
\hfill$\qed$

\section{Proof of Corollary~\ref{cor:dadu_failure} and the Suppression Window}
\label{app:cor1_proof}

Given: Proposition~\ref{prop:conditions} (C1--C3); DADU's update rule (Eq.~\ref{eq:dadu}); default
schedule $\kappa=1.05$, $\rho_0=0.1$, $\eta_r=0.01$, $W^*_{ij}\in[0.3,0.6]$
(Appendix~\ref{app:synthetic}).

\paragraph{DADU violates C1.} DADU evaluates $\Delta_{ij}$ at the current $W_{ij}(t)$, not at
$W^*_{ij}$:
\begin{equation}
    \Delta_{ij}^\mathrm{DADU}(t) := \Lfit(W\mid W_{ij}=0) - \Lfit(W\mid W_{ij}(t))
    \label{eq:cor1_c1_1}
\end{equation}
which is exactly Eq.~\ref{eq:p1c1_1}, so by Eq.~\ref{eq:p1c1_final}:
\begin{equation}
    \Delta_{ij}^\mathrm{DADU}(t) \approx 0 \quad\text{for } t>T^*.
    \label{eq:cor1_c1_final}
\end{equation}

\paragraph{DADU violates C2.} Substituting the default schedule into Eq.~\ref{eq:p1c2_final}:
\begin{align}
    1 + \frac{\eta_r\cdot\partial\Lfit/\partial W_{ij}|_{W^*_{ij}}}{\rho_0\cdot W^*_{ij}}
    &= 1 + \frac{0.01\,\alpha}{0.1\,w_0} = 1+\frac{\alpha}{10w_0} \label{eq:cor1_c2_1}\\
    \kappa = 1.05 \ \geq\ 1+\frac{\alpha}{10w_0} &\iff \alpha \leq 0.5\,w_0 \label{eq:cor1_c2_2}
\end{align}
For $w_0\in[0.3,0.6]$, $0.5w_0\in[0.15,0.3]$ is small relative to a non-degenerate fit-loss
gradient, so the default schedule generically fails Eq.~\ref{eq:p1c2_final}.
\begin{equation}
    \therefore\quad \kappa \geq 1+\frac{\eta_r\cdot\partial\Lfit/\partial
    W_{ij}|_{W^*_{ij}}}{\rho_0\cdot W^*_{ij}} \quad\text{(Eq.~\ref{eq:p1c2_final} fails: C2
    violated).} \label{eq:cor1_c2_final}
\end{equation}

\paragraph{DADU violates C3.} By Eq.~\ref{eq:cor1_c1_final}, $\mathbf 1[\Delta_{ij}(t)\geq\delta]=0$
for $t>T^*$ and any $\delta>0$, so Eq.~\ref{eq:p1c3_1} reduces to Eq.~\ref{eq:p1c3_final}:
\begin{equation}
    \therefore\quad \lambda_{ij}(t{+}1) = \lambda_{ij}(t)+\rho_t|W_{ij}(t)|M_{ij} \quad\text{
    (monotone non-decreasing, for every tested $\delta$).} \label{eq:cor1_c3_final}
\end{equation}

\paragraph{The suppression window.} $T^*$ is the epoch at which the penalty $\rho_t$ first reaches
the counterfactual signal it must overcome, $\Delta^*_{ij}\approx\alpha w_0$:
\begin{align}
    \rho_0\kappa^{T^*} &= \alpha w_0 \label{eq:t_star_2}\\
    T^* &= \frac{\log(\alpha w_0/\rho_0)}{\log\kappa}. \label{eq:t_star_final}
\end{align}
By Eq.~\ref{eq:cor1_c1_final}, $\Delta_{ij}(t)\approx 0$ for every $t>T^*$, independent of $\delta$,
$\eta_r$, or causal signal strength. \hfill$\qed$

\section{Proof of Lemma~\ref{lem:tie}}
\label{app:lemma1_proof}

Given: $g(w)=w/\sqrt{1+w^2}$, the correlation implied by weight $w$ in either direction
(Eq.~\ref{eq:two_node_cov}), an odd strictly-increasing bijection $\mathbb R\to(-1,1)$; target
$r\in(-1,1)\setminus\{0\}$; $w_0=g^{-1}(r)$.

Both $\Sigma_\mathrm{SEM}(W)$ and $\hat\Sigma$ have unit diagonal, so
$\corr(\Sigma_\mathrm{SEM}(W))$ and $\corr(\hat\Sigma)$ agree exactly on the diagonal:
\begin{align}
    \Lfit &= \|\corr(\Sigma_\mathrm{SEM}(W)) - \corr(\hat\Sigma)\|_F^2 \label{eq:l1_1}\\
    &= 2\big(\rho_\mathrm{model}-r\big)^2 \label{eq:l1_2}
\end{align}
Setting $w=w_0$ in either direction:
\begin{align}
    \rho_\mathrm{model} &= g(w_0) = r \label{eq:l1_3}\\
    \Lfit(w_0) &= 2(r-r)^2 = 0. \qquad\text{(i)} \label{eq:l1_4}
\end{align}
Zeroing the fitted edge:
\begin{align}
    \rho_\mathrm{model} &= g(0) = 0 \label{eq:l1_5}\\
    \Lfit(0) &= 2(0-r)^2 = 2r^2 \label{eq:l1_6}
\end{align}
\begin{equation}
    \therefore\quad \Delta_{1\to2}=\Delta_{2\to1}= \Lfit(0)-\Lfit(w_0) = 2r^2-0=2r^2.
    \qquad\text{(ii)}\qquad\qed \label{eq:l1_final}
\end{equation}

\section{Proof of Lemma~\ref{lem:separation}}
\label{app:lemma2_proof}

Given: $w_0\neq0$, the true forward weight; $\Sigma^*=\Sigma_\to(w_0)$ (Eq.~\ref{eq:two_node_cov}).

\paragraph{(i).}
\begin{align}
    \Sigma_\to(w_0) &= \Sigma^* \qquad\text{(by construction)} \label{eq:l2_1}\\
    \Lcov^\to(w_0) &= \|\Sigma_\to(w_0)-\Sigma^*\|_F^2 = 0 \label{eq:l2_2}\\
    \Lcov^\to(w) &\geq 0 \quad\forall w \label{eq:l2_3}\\
    \therefore\quad \min_w\Lcov^\to(w) &= 0, \text{ attained at } w=w_0. \qquad\text{(i)}
    \label{eq:l2_final_i}
\end{align}

\paragraph{(ii).}
\begin{align}
    \Sigma_\leftarrow(w') - \Sigma^*
    &= \begin{psmallmatrix} w'^2 & w'-w_0 \\ w'-w_0 & -w_0^2 \end{psmallmatrix} \label{eq:l2_4}\\
    \Lcov^\leftarrow(w') &= (w'^2)^2+(w'-w_0)^2+(w'-w_0)^2+(-w_0^2)^2 \label{eq:l2_5}\\
    &= w'^4 + 2(w'-w_0)^2 + w_0^4 \label{eq:l2_6}
\end{align}
$w'^4\geq0$ and $2(w'-w_0)^2\geq0$ for every real $w'$; both vanish simultaneously only if
$w'=0=w_0$, excluded since $w_0\neq0$.
\begin{align}
    \therefore\quad \Lcov^\leftarrow(w') &\geq w_0^4 \quad\forall w' \label{eq:l2_7}\\
    \inf_{w'}\Lcov^\leftarrow(w') &\geq w_0^4 > 0 = \Lcov^\to(w_0). \qquad\text{(ii)}\qquad\qed
    \label{eq:l2_final_ii}
\end{align}

The exact minimizer of $\Lcov^\leftarrow$ solves $w'^3+w'-w_0=0$ and gives a strictly larger gap
than $w_0^4$ for every $w_0\neq0$; we use the looser closed-form bound because strict separation is
all Section~\ref{sec:what_is_fixed}'s argument requires.

\section{On Proposition~\ref{prop:general}: A Worked Covered Reversal}
\label{app:general}

We verify Proposition~\ref{prop:general} concretely rather than only citing the graphical theory
behind it, since the reparameterization it invokes is not simply ``reverse the edge, keep every
weight.''

\paragraph{Setup.} Let $G$ have edges $1\to2$ (weight $a$), $1\to3$ (weight $b$), and $2\to3$
(weight $c$), all exogenous noise i.i.d.\ $\mathcal N(0,1)$:
\[
    z_1=\varepsilon_1, \qquad z_2 = a z_1+\varepsilon_2, \qquad z_3 = b z_1 + c z_2 + \varepsilon_3.
\]
Since $\mathrm{pa}(3)=\{1,2\}=\mathrm{pa}(2)\cup\{2\}$, the edge $2\to3$ is covered
\citep{chickering2002}: reversing it gives a DAG $G'$ with edges $1\to2$ (weight $a'$), $1\to3$
(weight $b'$), and $3\to2$ (weight $c'$),
\[
    z_1=\varepsilon_1, \qquad z_3' = b' z_1+\varepsilon_3', \qquad z_2' = a' z_1 + c' z_3' +
    \varepsilon_2',
\]
still Markov equivalent to $G$ by construction.

\paragraph{The reparameterization is not the identity on other edges.} Direct computation gives
$\mathrm{Cov}(z_1,z_2)=a$, $\mathrm{Cov}(z_1,z_3)=b+ca$, $\mathrm{Var}(z_2)=a^2+1$,
$\mathrm{Var}(z_3)=(b+ca)^2+c^2+1$, and $\mathrm{Cov}(z_2,z_3)=ab+a^2c+c$, and symmetrically for
$G'$. Symbolic elimination shows that holding $a'=a$ and $b'=b$ while only setting $c'=c$ does
\emph{not} reproduce $G$'s correlation matrix for generic $(a,b,c)$: matching all three
correlations is three equations in three unknowns, and the reversed edge weight alone is one degree
of freedom.

\paragraph{A verified instance.} Take $a=\tfrac12$, $b=\tfrac25$, $c=\tfrac35$ in $G$, giving
correlation matrix
\[
    \rho_{12}=\frac{\sqrt5}{5}\approx0.4472, \qquad \rho_{13}=\frac{7\sqrt{185}}{185}\approx0.5147,
    \qquad \rho_{23}=\frac{19\sqrt{37}}{185}\approx0.6247.
\]
Setting $c'=\tfrac35$ (the reversed edge \emph{keeps} its weight) and solving the two remaining
correlation equations for $(a',b')$ gives the exact closed-form solution
\[
    a' = \frac{13\sqrt{34}}{340} \approx 0.2229, \qquad b' = \frac{7\sqrt{34}}{68} \approx 0.6002,
\]
which reproduces $\rho_{12}$, $\rho_{13}$, \emph{and} $\rho_{23}$ exactly---verified directly by
substitution. The reversed edge's own weight need not change, but a \emph{different} edge (here,
$1\to2$) must, to keep the correlation matrix fixed. Proposition~\ref{prop:general} claims exactly
this---existence of some reparameterization reproducing the same correlation matrix---and no more.

\section{Synthetic Sweep Details}
\label{app:synthetic}

\paragraph{Graph generation.} Random DAGs are generated by drawing a random topological order and
including each edge consistent with that order independently with probability equal to the target
density, with weight magnitude drawn uniformly from $[0.25,0.8]$ and a uniformly random sign.

\paragraph{Covered-edge sampling.} For each graph, we search its true edges for one satisfying
Proposition~\ref{prop:general}'s covered-edge criterion ($\mathrm{pa}(i)\setminus\{j\}=\mathrm{pa}(j)$
for edge $j\to i$), retrying with a freshly drawn graph (same $d$, density) up to 25 times if none
is found. Every cell in the main grid found a valid covered edge for all 16 requested reps within
this budget.

\paragraph{Training hyperparameters.} $\rho_0=0.1$, $\kappa=1.05$, $\delta=0.10$, $\eta_r=0.01$,
$\gamma=10^{-3}$, $\beta_h=0.1$, $w_f=5.0$, Adam \citep{kingma2015adam} at learning rate
$5\times10^{-3}$, 40 epochs of 8 steps each. \ARelax{}-specific: probe length $P=15$ local Adam
steps at learning rate $0.05$, probe interval $T_p=1$ epoch, dual ceiling $\Lambda_{\max}=5.0$. The
least-squares and likelihood objectives of Section~\ref{sec:emp-obstruction2b} reuse this schedule
unchanged, so that any difference in outcome is attributable to the fitting objective rather than to
retuned hyperparameters.

\paragraph{Scale diagnostics referenced in Section~\ref{sec:emp-tradeoff} and
Section~\ref{sec:emp-obstruction2}.} The population covariance's squared Frobenius norm, measured
at $d=32$ over six random graphs per condition, grows from a mean of approximately $130$ under
equal-variance noise at $\sigma=0.5$ to approximately $15{,}300$ at $\sigma=2.0$, and to
approximately $43{,}700$ under heteroscedastic noise at $\sigma=2.0$---all at density $0.20$. At
$d=32$, increasing density from 0.15 to 0.30 grows the same quantity by a factor of $44.5$, versus
$1.26$ at $d=4$, $1.71$ at $d=8$, and $4.73$ at $d=16$ (six graphs per cell, equal-variance noise,
$\sigma=1$).

\paragraph{Hardware and compute.} All training runs are batched across the 16 reps within each grid
cell; the correlation/covariance sweep and the least-squares/likelihood extension of
Section~\ref{sec:emp-obstruction2b} were each executed as a single batched job, and the
direction-gap experiment (Section~\ref{sec:emp-obstruction2}) uses
\texttt{scipy.optimize.minimize} with the BFGS method \citep{2020SciPy-NMeth}, run to a gradient
tolerance of $10^{-9}$ independently of the training schedule.

\end{document}